\DeclareMathOperator{\tr}{tr} 
\definecolor{umich}{HTML}{000000}
\begin{document}
\title{Quantifying Aleatoric and Epistemic Dynamics Uncertainty via Local Conformal Calibration}
\titlerunning{ Quantifying Dynamics Uncertainty via Local Conformal Calibration}
%
\author{Lu\'is Marques\textsuperscript{(\Letter)}\orcidID{0000-0001-7211-411X} \and Dmitry Berenson\orcidID{0000-0002-9712-109X}}
\authorrunning{L. Marques and D. Berenson}
%
\institute{Robotics Department, University of Michigan, Ann Arbor, MI 48109, USA\\
\email{\{lmarques,dmitryb\}@umich.edu}}
\maketitle              
\begin{abstract}
Whether learned, simulated, or analytical, approximations of a robot's dynamics can be inaccurate when encountering novel environments. Many approaches have been proposed to quantify the aleatoric uncertainty of such methods, i.e. uncertainty resulting from stochasticity, however these estimates alone are not enough to properly estimate the uncertainty of a model in a novel environment, where the actual dynamics can change. Such changes can induce epistemic uncertainty, i.e. uncertainty due to a lack of information/data. Accounting for \textit{both} epistemic and aleatoric dynamics uncertainty in a theoretically-grounded way remains an open problem. We introduce \textbf{L}ocal \textbf{U}ncertainty \textbf{C}onformal \textbf{Ca}libration (LUCCa), a conformal prediction-based approach that calibrates the aleatoric uncertainty estimates provided by dynamics models to generate probabilistically-valid prediction regions of the system's state. We account for both epistemic and aleatoric uncertainty non-asymptotically, without strong assumptions about the form of the true dynamics or how it changes. The calibration is performed locally in the state-action space, leading to uncertainty estimates that are useful for planning. We validate our method by constructing probabilistically-safe plans for a double-integrator under significant changes in dynamics. Companion website: \href{https://um-arm-lab.github.io/lucca}{\textbf{\color{umich}https://um-arm-lab.github.io/lucca}}.
\keywords{dynamics  \and uncertainty quantification \and motion planning.}
\end{abstract}
\section{Introduction}
Optimal control with known dynamics is effective on a wide range of domains and is often applied to safety-critical tasks requiring provable guarantees \cite{devCDC21,xu2015CBF}. However, the underlying physical interactions of a robot can be difficult to model, leading to the use of simplifying assumptions which compromise fidelity and/or generality. For example, the no-slip assumption of unicycle dynamics may be appropriate when moving slowly over dry pavement, but not at high speeds or when the surface is slippery. Likewise, analytical quadrotor models might not accurately capture ground and near-wall aerodynamic effects, but are otherwise sufficient for free-space flying. While adding stochasticity to the model (e.g. by assuming Gaussian perturbations) can partially capture the true predictive uncertainty, when there are significant changes in the underlying physical interactions with the environment this approach can underrepresent the effect of model mismatch. Further, the heterogeneity of the stochastic perturbations does not necessarily match how the model error varies across the state-action space. Learned approximations of robot dynamics, in conjunction with or in place of analytical models, have been effective for tasks such as quadrotor landing \cite{shi19_lander} and locomotion on mud \cite{schoellig19ECC}. Unfortunately, learned systems can perform arbitrarily badly when out-of-distribution (OOD), making safe deployment in new scenarios challenging. This fundamental limitation of data-driven approaches is exacerbated in fast-changing and complex scenarios (e.g. autonomous driving), where there are numerous edge-cases and the distribution of data can shift between training and inference. Quantifying the uncertainty of learned predictors usually requires imposing limiting assumptions (e.g. zero-mean normal error distribution), provides only asymptotic bounds or relies on heuristics. Provable finite-sample guarantees are lacking. \\ 
\indent Conformal Prediction (CP) is a statistical framework that enables using the outputs of a predictor to construct prediction sets that are valid in a frequentist sense. Given a set of \textit{calibration data} in a new scenario, CP builds a set in the prediction output space that is guaranteed to contain the true label with at least a user-specified probability. Crucially, it makes no assumptions about the predictor's structure or the distribution of the data, and its guarantees are non-asymptotic. While typically tested in low-dimensional synthetic data, CP has the potential to mitigate many of the limitations of dynamics predictors. \\ \indent This paper proposes \textbf{L}ocal \textbf{U}ncertainty \textbf{C}onformal \textbf{Ca}libration (LUCCa), a probabilistically-valid conformal prediction-based approach that accounts for epistemic uncertainty by calibrating the aleatoric uncertainty estimates provided by dynamics predictors. Unlike other CP methods \cite{lindemann2023safe,sun2022copula}, we do not use a global calibration factor, as this tends to be overly conservative and uninformative. Instead our scaling factors are state and action dependent, thus accounting for changes in the approximate dynamics accuracy. LUCCa provably constructs calibrated regions for the first planning step, and, if the approximate dynamics are linear and we can make certain assumptions about the controller, also for all subsequent steps.
We provide analysis showing the validity of our approach and present experiments where LUCCa is used to construct probabilistically-safe plans for a double-integrator under significant dynamics changes. Our contributions are the following: 
\begin{itemize}[nolistsep,topsep=-\parskip]
    \item We propose an algorithm (LUCCa) that, given a dynamics predictor and a small calibration dataset, provides probabilistically valid prediction regions for the robot's future states accounting for both aleatoric and epistemic uncertainty.
    \item We prove the validity of our approach for any finite set of calibration data, a predictor which outputs a multivariate normal uncertainty, any unknown true dynamics function, and uncharacterized aleatoric perturbations.
    \item We calibrate the uncertainty locally relative to the system's state-action space, leading to prediction regions representative of predictive uncertainty.
    \item We demonstrate the effectiveness of our method through experiments on a double-integrator system.
\end{itemize}
\vspace{-0.5em}
\section{Related Work} \label{sec:related_work}
\vspace{-6pt}
 While many dynamical propagation algorithms provide uncertainty estimates, these are often not valid in a frequentist sense. In \cite{kchua2018}, the authors account for aleatoric uncertainty through a learned Multivariate Normal (MVN) predictor and for epistemic uncertainty through ensembles. In \cite{loiannoTRO24} a Gaussian representing aleatoric uncertainty is propagated throughout planning via the Unscented Transform, and in \cite{schoellig19ECC} Gaussian Processes are used to quickly capture changes in dynamics uncertainty. While all these approaches are practically useful, our method could calibrate their outputs to make them also probabilistically valid.

Recently, local CP theory \cite{guan2019,han2022split} has enabled the use of feature-dependent calibration factors, which has the potential to create prediction regions that are more \textit{adaptive} (with size representative of prediction difficulty) and \textit{sharp} (tight). This could also increase plan feasibility and provide the planning algorithm information about the accuracy of its dynamics model. However, these tools are often ill-suited for motion planning as they can produce spuriously large prediction regions, have a reduced local effect in higher-dimensional spaces or require large online computations. We utilize the LOCART method \cite{dtree_brazil}, which can be pre-checked for unbounded prediction regions and does not induce significant online computation overhead.

In robotics, CP has been used to increase the reliability of LLM-planners \cite{askHelp23} and to safely navigate around dynamical agents \cite{dixit23a}. More similar to our work, in \cite{lindemann2023safe,sun2022copula} the authors use CP to provide tight coverage guarantees over multi-step predictions. Conversely to both works, we assume access to a dataset of exchangeable transition tuples instead of full trajectories, thereby removing any implicit assumption about how prediction accuracy is dependent on the current time index. While previous work considered deterministic dynamics and constructed calibrated regions by expanding point predictions equally in all state dimensions (generating hyperspheres), we consider stochastic dynamics and introduce an uncertainty-aware non-conformity score which enables the construction of hyperellipsoidal calibrated regions. Further, instead of providing a calibration factor that is state and action independent, which is often too conservative for motion planning, we capture the variation of epistemic uncertainty across the state-action space.

\vspace{-10pt}
\section{Problem Statement}
\vspace{-6pt}
Let $s_t\in \mathcal S$ and $u_t\in \mathcal U$ respectively denote the robot's state and control input at time ${t}\in \mathbb N_0=\mathbb N\cup \{0\}$, where $\mathcal S$ and $\mathcal U$ represent the admissible regions. For convenience, define $X_t := (s_t, u_t)$, $Y_t := s_{t+1}$, the transition pair as ${Z_t:=(X_t, Y_t)}$ and the sets $\mathcal X := \mathcal S \times \mathcal U$ and $\mathcal Y:= \mathcal S$. We consider discrete-time stochastic systems and describe their time-invariant dynamics by the map $f\colon \bar{\mathcal X}  \mapsto \mathcal P_{Y\mid X}$ such that \vspace{-6pt}
\begin{equation}
    \label{eq:dynamics}
    Y_t \sim f(\bar{X}_t) \quad
    \vspace{-3pt}
\end{equation} 
where $\bar{\mathcal X} :=\mathcal P_{S} \times \mathcal U$, $\bar{X}_t \in \bar {\mathcal X}$ is a tuple with a distribution of the current state and an action, and $\mathcal P$ is the set of distributions that admit a density. We make no assumptions on the smoothness of $f$, which may even be discontinuous, and allow the aleatoric uncertainty in $Y$ to be heteroscedastic. In practice, $f$ is unknown and we only have access to an approximation $\tilde f$, which could be analytical, simulated, or learned through observations of state transitions. Hence, we never directly observe the output of $f$ (the full distribution), but solely a sparse signal of it (a single output for a single input). Generally, $\exists X_t$ s.t $d_{W_1}(\tilde{f}(X_t), f(X_t)) > \rho$ for a fixed small $\rho >0$, where $d_{W_1}$ is the 1-Wasserstein distance, and thus, without imposing any assumptions on the bounds or distribution of the approximation error, our approximate model is not directly suitable for provably safe planning. \\ \indent Furthermore, the true Markovian dynamics of the robot also depend implicitly on the underlying environment. Consider for example that the frictional contact underlying wheeled robot dynamics is surface-dependent, or that the aerodynamics governing a quadrotor are affected by nearby surfaces. Hence, even if enough data is eventually collected and $\tilde f$ is fit such that $d_{W_1}(\tilde{f}(X_t), \allowbreak f(X_t)) \to 0, \forall X_t$ on a training/practice environment (e.g. simulation or lab setting), this approximation quality does not generally translate to inference. Let us make this relationship explicit by writing $Z_t \sim \mathcal P_{train}$ during the synthesis of $\tilde f$ (e.g. by fitting a neural-network, performing system identification, etc.) and $Z_t \sim \mathcal P_{test}$ during deployment. We tackle the general case where $\mathcal P_{train}$ and $\mathcal P_{test}$ are both unknown, and the relationship between them is also unknown, allowing for the introduction of arbitrary epistemic uncertainty to $\tilde f$. \\ \indent \looseness-1
We consider dynamics approximations of the form $\tilde f \colon \mathcal N_{\dim(S)} \times \mathcal U \mapsto \mathcal N_{\dim(S)}$ where the input and output state distributions are MVNs with positive-definite covariance. While this enforces unimodality of our predictions, the ground truth system dynamics $f$ are still free to be non-Gaussian. We do not require any priors about which regions of the inference state-action space are better captured by our model's structure, and instead make the following assumption: 
 \begin{assumption} \label{assump:dataset}
     We have access to a calibration data set of transitions\break
     ${D_{cal} = \{Z^{(i)}\}_{i=1}^{\lvert D_{cal}\rvert}}$, where $Z^{(i)} \sim \mathcal P_{test}$. 
\end{assumption}

We seek to provide probabilistic guarantees for any finite $\lvert D_{cal}\rvert$, but it is expected that a larger dataset can improve the inference performance. As transition data is effectively self-labeled and most robots already track $(s_t, u_t)$ at a high rate (e.g. for localization, planning, etc), obtaining $D_{cal}$ should not induce significant overhead and is reasonable for deployed systems. As in previous CP works \cite{dixit23a,lindemann2023safe,sun2022copula}, we implicitly assume access to the ground-truth transitions making up the calibration dataset. Thus, a formal consideration of the effect state-estimation uncertainty is left for future work, although small amounts of noise should not significantly affect the required calibration factors. \\ \indent
The prediction of the system's state should account for both epistemic and aleatoric uncertainty. We aim to use the predicted uncertainty to control a system of unknown stochastic dynamics towards a goal region, while avoiding unsafe states with provable non-asymptotic safety guarantees, using a model that can be arbitrarily wrong in the uncharacterized deployment domain. We formally introduce the problem we study below. 
\vspace{-1.5em}
\subsubsection{Problem.} Given an approximation $\tilde f$ of the system's \textit{unknown} stochastic dynamics $f$, a goal region $\mathcal G \subseteq \mathcal S$, a safe set $\mathscr C \subseteq \mathcal S$, a calibration dataset $D_{cal}$ and an acceptable failure-rate $\alpha \in (0,1)$, we aim to recursively solve the following stochastic optimization problem with planning horizon $H \in \mathbb N$:
\vspace{-0.6em}
\begin{subequations}
\begin{alignat}{3}
&\!\min_{(u_{t},\ldots,u_{t+H-1})}        &\qquad& J(s, u, \mathcal G)\label{eq:optCost}\\
&\text{subject to} &      & Y_{\tau} \sim f(\bar{X}_\tau),\quad & \forall\tau\in\{t, \ldots, t+H-1\}\label{eq:optDynamics}\\
&               &  & \mathbb P(Y_{\tau} \in \mathscr C) \ge (1-\alpha),\quad & \forall\tau \in \{t,\ldots, t+H-1\}\label{eq:optSafety} \\
&               &  & u_{\tau} \in \mathcal U, s_{\tau+1} \in \mathcal S, \quad &\forall\tau\in\{t, \ldots, t+H-1\} \label{eq:optAdmissible}
\end{alignat}
\end{subequations}

\indent\textit{Dynamics, \eqref{eq:optDynamics}}: The real system evolves following the unknown stochastic dynamics $f$. We only have access to an approximation $\tilde f$ and the inference transitions in $D_{cal}$. \newline
\indent\textit{Safety, \eqref{eq:optSafety}}: Our trajectory should remain probabilistically safe, which we define as requiring it to lie within a safe set $\mathscr C$ with at least a user-specified probability. This is difficult to guarantee in general for any $f$ and $\tilde f$, since our approximation can be arbitrarily wrong in deployment conditions.
\newline
\indent\textit{State and Control Admissibility, \eqref{eq:optAdmissible}}: Both the control inputs and the states must belong to pre-defined sets. \newline
\indent\textit{Objective Function, \eqref{eq:optCost}}: Additionally, we aim to achieve optimality relative to an objective that incorporates $\mathcal G$ along with the state and control sequences, $s := (s_{t+1},\ldots, s_{t+H})$ and $u := (u_{t},\ldots,u_{t+H-1})$. For example, $J$ might minimize the expectation of a distance metric to $\mathcal G$, control effort, or epistemic uncertainty along the state-control sequences.

Practically, the safety condition \eqref{eq:optSafety} can be conservatively re-written as \vspace{-3pt}
\begin{equation}\label{eq:alternative_safety}
    \mathbb P (Y_{\tau} \in \mathcal C(\bar{X}_{\tau})) \ge (1-\alpha) \quad \land\quad \mathcal C(\bar{X}_{\tau}) \subseteq \mathscr C, \quad \forall\tau \in \{t,\ldots, t+H-1\}
    \vspace{-3pt}
\end{equation}
where $\mathcal C\subseteq \mathcal Y$ is a valid prediction region of the future state. This is stronger than condition \eqref{eq:optSafety} since states outside $\mathcal C(\bar{X})$ are not necessarily unsafe. Constructing $\mathcal C$ non-asymptotically is not trivial, since we must account for epistemic uncertainty using only  $D_{cal}$. Additionally, $\mathcal C$ should accurately reflect prediction difficulty (be larger for inputs with greater associated predictive uncertainty) and be as tight as possible. Before presenting our method, we briefly introduce conformal prediction which we will use to construct $\mathcal C$.

\vspace{-10pt}
\section{Split Conformal Prediction}\label{sec:CP}
\label{sec:splitCP}
\vspace{-6pt}
We focus on Split Conformal Prediction (SplitCP), a special case of the original Full Conformal Prediction method \cite{vovk2005algorithmic}, which is more computationally efficient as it only requires fitting the predictor once. We summarize this method below. We recommend the following tutorials \cite{angelopoulos2021gentle,shafer2008tutorial} for a deeper overview of the conformal prediction literature.

For simplicity, we will describe the SplitCP method for a function that maps from point-inputs in $\mathcal X$ to labels in $\mathcal Y$ (we will then discuss how to use the same process for input distributions from $\bar{\mathcal X}$). Let $\hat f \colon \mathcal X \mapsto \mathcal H$ be an arbitrary predictor mapping features to a prediction space $\mathcal H$. While typically in literature $\mathcal H = \mathcal Y$, this is not strictly required (we use $\mathcal H = \mathcal P_{Y\mid X}$). Given a new inference tuple $Z^{(new)}$ which is exchangeable\footnote{The random vector $(Z^{(1)}, \ldots, Z^{\lvert D_{cal}\rvert}, Z^{(new)})$ is \textit{exchangeable} if its joint probability is invariant to permutations of its elements (i.e. they are equally likely to appear in any ordering). This is a weaker requirement than iid, as iid $\implies$ exchangeability.} with $D_{cal}$, SplitCP allows us to generate a feature-dependent prediction region  $\mathcal C\subseteq \mathcal Y$ such that \vspace{-6pt}
\begin{equation}\label{eq:CP_marginal_validty}
    \mathbb P(Y^{(new)} \in \mathcal C(X^{(new)})) \ge 1-\alpha
    \vspace{-6pt}
\end{equation} 
where $X^{(new)}$ is the observed feature and  $Y^{(new)}$ its corresponding unseen label. Thus, given a new inference state-action pair $(s_t, u_t)$, a $\mathcal C$ satisfying equation \eqref{eq:CP_marginal_validty} will contain the \textit{true} next state $(s_{t+1})$ with (at least) a user-determined likelihood. The probability is taken over the randomness in $Z := (Z^{(1)}, \ldots, Z^{\lvert D_{cal}\rvert},\allowbreak Z^{(new)})$, and hence this property is termed \textit{marginal coverage}. Let us define a scalar-valued non-conformity measure $r \colon \mathcal H \times \mathcal Y \mapsto \mathbb R$ encoding the disagreement between the model predictions and the true label such that higher values correspond to a greater disparity. By applying this measure over the calibration tuples we can obtain a set of residuals $R^{(i)}$ as follows \vspace{-6pt}
\begin{equation}
    R^{(i)} := r\left(\hat f(X^{(i)}), Y^{(i)}\right), \quad \forall i \in \{1, \ldots, \lvert D_{cal} \rvert \}
    \vspace{-6pt}
\end{equation}
and define $R := \{R^{(i)}\}_{i=1}^{\lvert D_{cal}\rvert} \cup \{R^{(new)}\}$. Until we observe $Y^{(new)}$ we cannot determine $R^{(new)}$, so we conservatively define $R_\infty:= (R\setminus \{R^{(new)}\})  \cup \{\infty \}$. Let $F_{R_\infty}$ be the empirical (weighted) distribution of the residuals \vspace{-6pt}
\begin{equation} \label{eq:empirical_residual_F}
    F_{R_\infty}(x) = w^{(new)}\mathbbm 1_{\{\infty = x\}} + \sum_{i=1}^{\lvert D_{cal}\rvert}  w^{(i)}\mathbbm 1_{\{ R^{(i)}\le x \}}
    \vspace{-6pt}
\end{equation}
and $\hat q := \mathcal Q \left(1-\alpha; F_{R_\infty} \right)=\inf \{x\in \mathbb R \colon F_{R_\infty}(x) \ge (1-\alpha)\} $ its $(1-\alpha)$-quantile. In SplitCP the weights $w$ are all equal and sum to one. Since $\hat f$, $r$ and $w$ are fixed before
determining $\{R^{(i)}\}_{i=1}^{\lvert D_{cal}\rvert}$, $R$ is exchangeable when conditioned on them. Thus, by a rank statistics argument \cite{tibshirani2019conformal}, the rank of $R^{(new)}$ is uniformly distributed over the indices of $R$ and we can determine that \vspace{-6pt}
\begin{equation}
    \mathbb P(R^{(new)} \le \hat q) \ge (1-\alpha)
    \vspace{-6pt}
\end{equation}
Consequently, the following set $\mathcal C$ is marginally valid: \vspace{-6pt}
\begin{equation}
    \mathcal C(X^{(new)}) := \{ y \in \mathcal Y \colon r(\hat f(X^{(new)}), y) \le \hat q\}
    \vspace{-6pt}
\end{equation} 
While $\mathcal C$ is dependent on $\alpha$, since $\alpha_1 \ge \alpha_2 \implies C_{\alpha_1} \subseteq \mathcal C_{\alpha_2}$, we omit this and regard $\alpha$ as a fixed hyperparameter.
Note that no assumptions were made about the structure of $\hat f$, $\lvert D_{cal}\rvert$ or the distributions $\mathcal P_{train}$ and $\mathcal P_{test}$ (hence why SplitCP is called \textit{distribution-free}). While the choice of $r$ and quality of $\hat f$ do not affect SplitCP's validity, they do impact its adaptiveness and sharpness. 

\vspace{-8pt}
\subsection{Beyond Marginal Coverage}
\vspace{-6pt}
Marginal coverage alone has limited utility for robotic systems. First, it can be trivially achieved by $\mathcal C(X^{(new)}) = \mathcal Y, \forall X^{(new)}$ which is not \textit{sharp}. Second, it does not capture how the predictor's epistemic uncertainty varies across the state-action space (\textit{adaptiveness}), since it depends on a feature-independent $\hat q$, and thus can not help us disambiguate between plans traversing regions of high and low epistemic uncertainty. Finally, it can severely misscover sub-sections of the feature space that are underrepresented in $D_{cal}$. To address these properties of interest for safe motion planning, let us consider stronger guarantees. Ideally, our conformal predictor would provide \textit{(object-)conditional coverage} \cite{pmlr-v25-vovk12}, i.e. satisfy
\begin{equation} \label{eq:cond-validity}
    \mathbb P(Y^{(new)} \in \mathcal C(X^{(new)}) \mid X^{(new)}=x) \ge 1-\alpha, \quad \forall x \in \mathcal X
\end{equation}
so that we can provide probabilistic guarantees for any inference state-action tuple.  Unfortunately, exact conditional validity has been proven to be impossible for a finite number of samples \cite{lei_condImpossible2014}, for all but the trivial predictor $\mathcal C = \mathcal Y$. Our algorithm approximates \textit{asymptotic conditional coverage} under mild assumptions (see \S\ref{sec:local_adaptivity}). Additionally, we achieve \textit{local coverage} \cite{lei_condImpossible2014} with \textit{finite samples}, i.e. satisfying
\begin{equation}\label{eq:loc-validity}
    \mathbb P(Y^{(new)} \in \mathcal C(X^{(new)}) \mid X^{(new)}\in \mathcal X_k) \ge 1-\alpha, \quad \forall k\in \{ 1, \ldots, K\}
\end{equation}
where $\cup_{k=1}^K \mathcal X_k= \mathcal X$ is a finite feature-space partition that we construct automatically from $D_{cal}$. This compromise between conditional and marginal validity enables us to obtain a different $\hat{q}_k$ per partition. For simplicity, we presented SplitCP for point-inputs above. However, our dynamics predictor $\tilde{f}$ takes a distribution over the state as input. We can use the same process as above for distributions by simply concatenating the parameters of the distribution (here the mean and covariance of a MVN) into a vector and using this in place of the point-input. 

\vspace{-10pt}
\section{LUCCa: Local Uncertainty Conformal Calibration}\vspace{-6pt}\looseness-1
Given access to approximate dynamics $\tilde f$, and a finite calibration set of state transitions, we will show how to calibrate $\tilde f$'s estimates to be probabilistically valid for the epistemic uncertainty arising in inference.  We then chain prediction regions of the calibrated model in an MPC scheme which, under additional assumptions about the approximate dynamics and controller (see Appendix \ref{app:proof_Multi_linear}), devises provably safe robot trajectories. Figure \ref{fig:method} showcases the overall method and Algorithm \ref{alg:LUCA} details its steps.

\begin{figure}
\includegraphics[width=\textwidth]{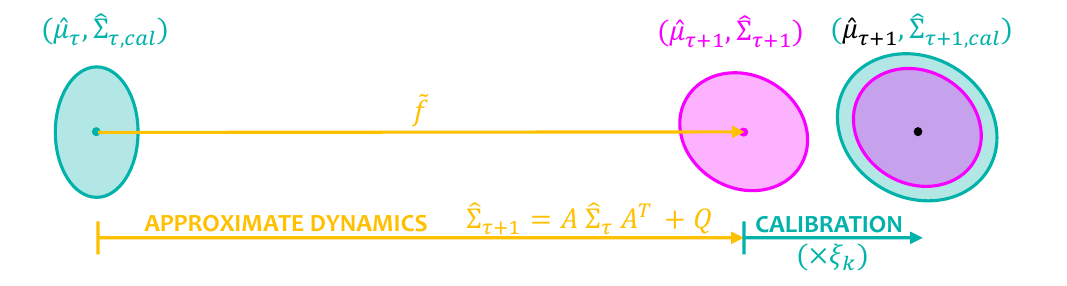}
\caption{Schematic of Local Uncertainty Conformal Calibration (LUCCa). Starting from a calibrated prediction region $\hat{\mathcal N}_{\tau, cal}$, we propagate the state uncertainty by composing an approximate dynamics model $\tilde f$ outputting predictive MVNs of the future state with local conformal calibration. We consider approximate dynamics where the input and output distributions are MVNs of the state, but otherwise do not restrict the structure of $\tilde f$. In our experiments we use an analytical method for propagating a linear $\tilde f$.} \label{fig:method}
\end{figure}

\begin{algorithm} \label{alg:LUCA}
\caption{Local Uncertainty Conformal Calibration (LUCCa)}
\SetInd{0.25em}{1em}
\SetAlgoLined
\DontPrintSemicolon

\KwIn{$\tilde{f}, g, D_{cal}, \alpha, J$}
\For(\tcp*[f]{Collect Calibration Residuals (Offline)})
{$\{X^{(i)},Y^{(i)}\} \in D_{cal}$}
{$\bar{X}^{(i)}\leftarrow g(X^{(i)})$  \tcp*[f]{Map $D_{cal}$ to $\bar{D}_{cal}$, using Eq \eqref{eq:state2MVNmap}}
\\
    $R^{(i)}\leftarrow d_M(\tilde{f}(\bar{X}^{(i)}),Y^{(i)})$ \tcp*[f]{Calculate Calibration Residuals, Eq \eqref{eq:nonConformityScore}}
}
 \texttt{DTree} $\leftarrow$ \texttt{LOCART}($R, D_{cal}, \alpha$) \tcp*[f]{Fit Decision Tree (Once Offline), $\S$\ref{sec:local_adaptivity}}
    
\While(\tcp*[f]{MPC Planning (Online)}){$s_t \notin \mathcal G \land s_t\in \mathscr C$}{\tcp*[l]{Obtain optimal action sequence from MPPI samples using Alg \ref{alg:rollout}}
{$u^* \leftarrow$ \texttt{MPPI}($s_t$, \texttt{CALIBRATED\_ROLLOUT}(\texttt{DTree}$, \tilde f$)$, J$)} \\
$u^*_{t+1} \leftarrow u^*[0]$ \tcp*[f]{Execute 1st action from optimal control trajectory}
}
\end{algorithm}

\begin{algorithm} \label{alg:rollout}
\caption{\texttt{CALIBRATED\_ROLLOUT}\\ \text{(Rollout Approximate Dynamics with Local Conformal Calibration)}}
\SetInd{0.25em}{1em}
\SetAlgoLined
\DontPrintSemicolon

\KwIn{\textnormal{\texttt{DTree}}$, \tilde f$}
        {\eIf(\tcp*[f]{First planning step}){$ \tau = t $} 
            { $\hat{\mathcal N}_\tau \leftarrow \Sigma_0$ \tcp*[f]{Set starting state uncertainty, see Eq \eqref{eq:state2MVNmap}}}
            (\tcp*[f]{Subsequent planning steps}){$ \hat{\mathcal N}_{\tau} \leftarrow \hat{\mathcal N}_{\tau-1, cal}$ \tcp*[f]{Set calibrated uncertainty from previous step}}

        }
        $ \hat{\mathcal N}_{\tau+1} \leftarrow \tilde{f}(\hat{\mathcal N}_{\tau},  u^{(sample)})$ \tcp*[f]{Propagate $\hat {\mathcal N}_\tau$ with MPPI-sampled action} \\
        $\xi_k \leftarrow$ \texttt{DTree}$(\mathbb E(\hat{\mathcal N}_{\tau}), u^{(sample)})$ \tcp*[f]{Query DTree to obtain scaling factor} \\
       $ \hat{\mathcal N}_{\tau+1, cal} \leftarrow \xi_k \hat{\mathcal N}_{\tau + 1}$  \tcp*[f]{Calibrate Approximate Uncertainty, Thm \ref{thm:cal1step}} \\
        \textbf{return} $ \hat{\mathcal N}_{\tau+1, cal} $ 
\end{algorithm}

\vspace{-8pt}
\subsection{Approximate Uncertainty Propagation}
\vspace{-6pt}
We start by showing how to construct predictive safe regions for the ideal case, where $\tilde{f}$ exactly matches $f$, and then show how to use conformal prediction to generalize to an arbitrary discrepancy between $\tilde{f}$ and $f$. 

\begin{lemma}[Theorem 4.5 from \cite{simar19}] \label{lemma:maha-transf}
Let $ Y^{(i)} \sim \mathcal N(\hat \mu_{\tau}, \hat \Sigma_{\tau})$ and its Mahalanobis transformation be $ W^{(i)} =\hat{\Sigma}_{\tau}^{-1/2} ( Y^{(i)}-\hat \mu_{\tau})$.  Then $ W^{(i)} \sim \mathcal N (0, I)$,  where $I$ is an identity matrix of appropriate dimensions.
\end{lemma}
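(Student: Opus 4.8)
The plan is to treat $W^{(i)}$ as an affine transformation of the multivariate normal $Y^{(i)}$ and invoke the standard closure property of the Gaussian family: if $Y \sim \mathcal{N}(\mu, \Sigma)$ and $W = AY + b$ for a fixed matrix $A$ and fixed vector $b$, then $W$ is again multivariate normal with mean $A\mu + b$ and covariance $A\Sigma A^\top$. Here I would identify $A = \hat\Sigma_\tau^{-1/2}$ and $b = -\hat\Sigma_\tau^{-1/2}\hat\mu_\tau$, both of which are well-defined because $\hat\Sigma_\tau$ is positive definite, so its symmetric square root and the inverse thereof exist.

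First I would establish that $W^{(i)}$ is itself normal, which is immediate since the Mahalanobis map is affine and normality is preserved under affine maps; this can be verified rigorously through the moment-generating (or characteristic) function of $Y^{(i)}$ by evaluating $\mathbb{E}[\exp(t^\top W^{(i)})]$ and recognizing the resulting Gaussian form. Then I would compute the two parameters. For the mean, linearity of expectation yields $\mathbb{E}[W^{(i)}] = \hat\Sigma_\tau^{-1/2}\mathbb{E}[Y^{(i)}] - \hat\Sigma_\tau^{-1/2}\hat\mu_\tau = \hat\Sigma_\tau^{-1/2}\hat\mu_\tau - \hat\Sigma_\tau^{-1/2}\hat\mu_\tau = 0$. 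For the covariance, the closure property gives $\mathrm{Cov}(W^{(i)}) = \hat\Sigma_\tau^{-1/2}\,\hat\Sigma_\tau\,(\hat\Sigma_\tau^{-1/2})^\top$.

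The one place requiring care, and the only real obstacle (though a mild one), is reducing this covariance to the identity. I would insist on the symmetric positive-definite square root, obtained from the eigendecomposition $\hat\Sigma_\tau = U\Lambda U^\top$ by setting $\hat\Sigma_\tau^{1/2} = U\Lambda^{1/2}U^\top$, so that $\hat\Sigma_\tau^{-1/2}$ is itself symmetric and $(\hat\Sigma_\tau^{-1/2})^\top = \hat\Sigma_\tau^{-1/2}$. With this choice, $\mathrm{Cov}(W^{(i)}) = \hat\Sigma_\tau^{-1/2}\hat\Sigma_\tau\hat\Sigma_\tau^{-1/2} = \hat\Sigma_\tau^{-1/2}\hat\Sigma_\tau^{1/2}\hat\Sigma_\tau^{1/2}\hat\Sigma_\tau^{-1/2} = I$, using that $\hat\Sigma_\tau^{1/2}$ and $\hat\Sigma_\tau^{-1/2}$ are mutual inverses. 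Combining zero mean with identity covariance gives $W^{(i)} \sim \mathcal{N}(0, I)$, which completes the argument. I expect the subtlety of the transpose in $A\Sigma A^\top$ to be the only point where a careless use of a non-symmetric square root (e.g.\ a Cholesky factor) would change the intermediate computation, so I would flag the symmetric-root convention explicitly.
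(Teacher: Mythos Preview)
Your argument is correct and is exactly the standard proof of this classical fact: affine images of multivariate normals are multivariate normal, and a direct computation of the mean and covariance under $A=\hat\Sigma_\tau^{-1/2}$, $b=-\hat\Sigma_\tau^{-1/2}\hat\mu_\tau$ yields $\mathcal N(0,I)$. Your remark about fixing the symmetric positive-definite square root so that $(\hat\Sigma_\tau^{-1/2})^\top=\hat\Sigma_\tau^{-1/2}$ is the right way to make the covariance reduction clean.

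There is nothing to compare against in the paper itself: the lemma is stated without proof and simply attributed to an external reference (Theorem~4.5 of \cite{simar19}). Your write-up therefore supplies what the paper omits, and it does so via the same route that textbook would take. One minor sharpening: a Cholesky factor $L$ with $\hat\Sigma_\tau=LL^\top$ would also give $L^{-1}\hat\Sigma_\tau L^{-\top}=I$, so the result does not actually hinge on symmetry of the square root; the only requirement is that whatever matrix you call $\hat\Sigma_\tau^{-1/2}$ satisfies $\hat\Sigma_\tau^{-1/2}\,\hat\Sigma_\tau\,(\hat\Sigma_\tau^{-1/2})^\top=I$. Your choice of the symmetric root is the cleanest convention, but you may want to soften the claim that a non-symmetric root ``would change the intermediate computation.''
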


\begin{definition}[Chi-squared Distribution]
Let $ W_j^{(i)} \sim \mathcal N(0,1)$ for \break$j=1,\ldots, \dim({\mathcal S})$. Then the random variable $\sum_{j=1}^{\dim({\mathcal S})}  W_j^{{(i)}^2}$ follows a chi-squared distribution with $\dim({\mathcal S})$ degrees of freedom, denoted $\chi^2_{\dim({\mathcal S})}$.
\end{definition}

We can then show that the squared Mahalanobis distance of the true next state from the predictive distribution follows a $\chi^2$ distribution:

\begin{theorem}
   Let $d_M( Y^{(i)};\hat{\mathcal N}_\tau) := \sqrt{( Y^{(i)}-\hat \mu_\tau)^\top \hat{\Sigma}^{-1}_\tau( Y^{(i)}-\hat{\mu}_\tau)}$ be the Mahalanobis distance from $ Y^{(i)}$ to $\hat{\mathcal N}_\tau$. Then $d_M^2( Y^{(i)};\hat{\mathcal N}_\tau)\sim \chi_{\dim(\mathcal S)}^2$.
\end{theorem}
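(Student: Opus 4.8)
The plan is to recognize that the squared Mahalanobis distance equals the squared Euclidean norm of the Mahalanobis-transformed variable from Lemma \ref{lemma:maha-transf}, and then to invoke the definition of the chi-squared distribution directly. The only structural fact I need is that the predictive covariance $\hat\Sigma_\tau$ is positive-definite (assumed in the problem statement), so it admits a unique symmetric positive-definite square root with symmetric inverse $\hat\Sigma_\tau^{-1/2}$ satisfying $\hat\Sigma_\tau^{-1} = \hat\Sigma_\tau^{-1/2}\hat\Sigma_\tau^{-1/2}$.

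First I would factor the inverse covariance and exploit the symmetry $(\hat\Sigma_\tau^{-1/2})^\top = \hat\Sigma_\tau^{-1/2}$ to rewrite the squared distance as
\[
d_M^2(Y^{(i)};\hat{\mathcal N}_\tau) = \left(\hat\Sigma_\tau^{-1/2}(Y^{(i)}-\hat\mu_\tau)\right)^\top \left(\hat\Sigma_\tau^{-1/2}(Y^{(i)}-\hat\mu_\tau)\right) = (W^{(i)})^\top W^{(i)} = \sum_{j=1}^{\dim(\mathcal S)} W_j^{{(i)}^2},
\]
where $W^{(i)} = \hat\Sigma_\tau^{-1/2}(Y^{(i)}-\hat\mu_\tau)$ is exactly the transformed vector of Lemma \ref{lemma:maha-transf}. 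Applying that lemma gives $W^{(i)} \sim \mathcal N(0,I)$, so each coordinate satisfies $W_j^{(i)} \sim \mathcal N(0,1)$, and---since the off-diagonal entries of the identity covariance vanish for jointly Gaussian coordinates---the $W_j^{(i)}$ are mutually independent. This matches the hypothesis of the chi-squared definition exactly, so the sum of their squares is distributed as $\chi^2_{\dim(\mathcal S)}$, which establishes the claim.

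I expect the only subtle point to be the square-root step: the factorization must use the symmetric positive-definite root so that the resulting transformed variable coincides with the $W^{(i)}$ of Lemma \ref{lemma:maha-transf} rather than an arbitrary (e.g. Cholesky) factor; positive-definiteness of $\hat\Sigma_\tau$ guarantees such a root exists and is symmetric. Beyond that, the result is a direct chaining of the two preliminary statements, so I anticipate no genuine obstacle.
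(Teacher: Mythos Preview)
Your proof is correct and follows essentially the same approach as the paper: rewrite $d_M^2$ as $W^\top W = \sum_j W_j^{{(i)}^2}$ via Lemma~\ref{lemma:maha-transf}, then invoke the chi-squared definition. Your version simply spells out the square-root factorization and the independence of the coordinates more explicitly than the paper's one-line proof.
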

\begin{proof}
    Since $d_M^2( Y^{(i)};\hat{\mathcal N}_\tau)= W^\top  W=\sum_{j=1}^{\dim({\mathcal S})}  W_j^{{(i)}^2}$, then $d_M^2\sim \chi^2_{\dim(\mathcal S)}$.
\end{proof}
 The following region would then contain the next state $(1-\alpha)\%$ of the time

\begin{lemma}[Result 4.7 from \cite{johnson88}]
\label{lemma:prob_mass_covariance}
Let $\hat{\Sigma}_\tau$ be positive definite and $\chi^2_{\dim(\mathcal S),\alpha} := \mathcal Q(1-\alpha; \chi^2_{\dim(\mathcal S)})$ be the $(1-\alpha)$-quantile of the $\chi^2_{\dim(\mathcal S)}$ distribution. Then the solid hyperellipsoid 
\vspace{-8pt}
\begin{equation*}
    \varepsilon := \{ y \in \mathcal Y :d_M^2( y;\hat{\mathcal N}_\tau)\le \chi^2_{\dim(\mathcal S),\alpha} \}
    \vspace{-6pt}
\end{equation*}
contains ${100(1-\alpha)\%}$ of the probability mass of $\hat{\mathcal N}_\tau$, i.e. $\mathbb P (\hat Y^{(i)} \in \varepsilon ) = 1-\alpha$.
\end{lemma}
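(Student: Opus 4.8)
The plan is to reduce the geometric statement about the hyperellipsoid $\varepsilon$ to a one-dimensional statement about the cumulative distribution function (CDF) of a $\chi^2$ random variable, a quantity we already control through the preceding Theorem. First I would observe that, by the very definition of $\varepsilon$, membership is equivalent to a scalar threshold condition: for any draw $\hat Y^{(i)} \sim \hat{\mathcal N}_\tau$,
\[
    \hat Y^{(i)} \in \varepsilon \iff d_M^2(\hat Y^{(i)};\hat{\mathcal N}_\tau) \le \chi^2_{\dim(\mathcal S),\alpha}.
\]
Hence $\mathbb P(\hat Y^{(i)} \in \varepsilon) = \mathbb P\bigl(d_M^2(\hat Y^{(i)};\hat{\mathcal N}_\tau) \le \chi^2_{\dim(\mathcal S),\alpha}\bigr)$, and the problem becomes entirely about the law of the scalar $d_M^2$.

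Next I would invoke the preceding Theorem, which (via the Mahalanobis transformation of Lemma~\ref{lemma:maha-transf}, valid precisely because $\hat{\Sigma}_\tau$ is positive definite and therefore admits an invertible square root) establishes $d_M^2(\hat Y^{(i)};\hat{\mathcal N}_\tau)\sim \chi^2_{\dim(\mathcal S)}$. Writing $F$ for the CDF of this $\chi^2_{\dim(\mathcal S)}$ law, the probability above is simply $F(\chi^2_{\dim(\mathcal S),\alpha})$. It then remains to evaluate $F$ at the $(1-\alpha)$-quantile $\chi^2_{\dim(\mathcal S),\alpha} = \mathcal Q(1-\alpha;\chi^2_{\dim(\mathcal S)})$ and to conclude that $F(\chi^2_{\dim(\mathcal S),\alpha}) = 1-\alpha$.

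The main---and essentially only---delicate point is obtaining \emph{equality} rather than the inequality $\ge 1-\alpha$ that the quantile definition $\mathcal Q(1-\alpha;F) = \inf\{x : F(x) \ge 1-\alpha\}$ supplies in general. This equality is exactly what distinguishes the present continuous-distribution result from the discrete, rank-based coverage bound of SplitCP. To secure it I would use that, for $\dim(\mathcal S) \ge 1$, the $\chi^2_{\dim(\mathcal S)}$ law admits a density that is strictly positive on $(0,\infty)$; consequently $F$ is continuous and strictly increasing on its support, so $F$ is a genuine inverse of $\mathcal Q$ and $F(\mathcal Q(1-\alpha)) = 1-\alpha$ holds exactly. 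Substituting back yields $\mathbb P(\hat Y^{(i)} \in \varepsilon) = 1-\alpha$, as claimed. I expect the continuity and strict-monotonicity justification to be the step warranting the most care, while the remaining manipulations are immediate once the scalar reduction is in place.
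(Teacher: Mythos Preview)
The paper does not supply its own proof of this lemma: it is stated as a citation (``Result 4.7 from \cite{johnson88}'') and invoked without further argument. Your proposal is correct and is precisely the standard derivation one would give---reducing membership in $\varepsilon$ to the event $\{d_M^2 \le \chi^2_{\dim(\mathcal S),\alpha}\}$, appealing to the preceding theorem for $d_M^2 \sim \chi^2_{\dim(\mathcal S)}$, and then using continuity of the $\chi^2$ CDF to upgrade the generic quantile inequality to an exact equality. There is nothing to add or correct; your attention to the continuity point (which secures $=1-\alpha$ rather than $\ge 1-\alpha$) is the only subtlety, and you handle it properly.
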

However, in practice $\tilde f$ can be arbitrarily inaccurate. We now show how to transform the output of $\tilde f$ so that it remains probabilistically valid.

\vspace{-8pt}
\subsection{Calibrating Uncertainty Estimates} \label{sec:calibrate}
\vspace{-6pt}
To provide performance guarantees without placing assumptions on the inference dynamics, we will leverage SplitCP. Let us define a deterministic map $g\colon \mathcal X \mapsto \mathcal N_{\dim(\mathcal S)} \times \mathcal U$ transforming state-action tuples as follows \vspace{-6pt}
\begin{equation}\label{eq:state2MVNmap}
    \bar{X}^{(i)} = g(X^{(i)})=\left(\mathcal N_{\dim(\mathcal S)}(s_t^{(i)}, \Sigma_0), u_t^{(i)}\right)
    \vspace{-6pt}
\end{equation}\looseness-1 
where $\Sigma_0$ is a small user-selected initial uncertainty level. We can then construct $\bar{D}_{cal}$ by replacing all $X^{(i)}$ by $\bar{X}^{(i)}$. It follows that for a new inference state-action pair $X^{(new)} \sim \mathcal P_{test}$ which is exchangeable with $D_{cal}$, the tuple $\bar{X}^{(new)}=g(X^{(new)})$ is exchangeable with $\bar{D}_{cal}$ as $g$ is fixed. We define the non-conformity metric $r$ as the Mahalanobis distance between the approximate uncertainty estimate obtained from propagating $\bar{X}^{(i)}$ and the true next state, i.e.
\begin{equation} \label{eq:nonConformityScore}
    R^{(i)} = r(\tilde f(\bar{X}^{(i)}), Y^{(i)}) := d_M(\tilde f(\bar{X}^{(i)}), Y^{(i)})
\end{equation}
Observe that, for a fixed $Y^{(i)}$ and $X^{(i)}$, $R^{(i)}$ will decrease as the the growth in uncertainty due to $\tilde f$ increases. We can now define a set of residuals $R$, $R_\infty$ and $\hat q$ as in $\S$ \ref{sec:splitCP}. Let us use this to calibrate the approximate uncertainty propagation in the first step planning step:

\begin{theorem}\label{thm:cal1step}
Let $\hat{\mathcal N}_{t+1} =\tilde f(\bar{X}^{(new)})$ be the predictive distribution for a new test input $X^{(new)}$ and $\xi :=\nicefrac{\hat{q}^2}{\chi^2_{\dim(\mathcal S),\alpha}}$. Then the solid hyperellipsoid containing ${(1-\alpha)\%}$ of the probability mass of $\hat{\mathcal N}_{t+1,cal}(\hat{\mu}_{t+1}, \hat{\Sigma}_{t+1,cal})$, where $\hat{\Sigma}_{t+1,cal} =\xi \hat{\Sigma}_{t+1}$, contains $Y^{(new)}$ with probability greater than or equal to $(1-\alpha)$.
\end{theorem}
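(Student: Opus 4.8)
The plan is to reduce the calibrated-ellipsoid coverage claim to the marginal SplitCP guarantee $\mathbb{P}(R^{(new)} \le \hat{q}) \ge 1-\alpha$ already established in \S\ref{sec:splitCP}. First I would invoke Lemma \ref{lemma:prob_mass_covariance}, applied to the \emph{scaled} distribution $\hat{\mathcal{N}}_{t+1,cal}$, to write its $(1-\alpha)$ probability-mass region explicitly as the solid hyperellipsoid $\varepsilon_{cal} := \{y \in \mathcal{Y} : d_M^2(y;\hat{\mathcal{N}}_{t+1,cal}) \le \chi^2_{\dim(\mathcal{S}),\alpha}\}$. This application is legitimate because $\hat{\Sigma}_{t+1,cal} = \xi\hat{\Sigma}_{t+1}$ remains positive definite whenever $\hat{\Sigma}_{t+1}$ is and $\xi > 0$. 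The target statement then becomes the claim that $Y^{(new)} \in \varepsilon_{cal}$ exactly when $R^{(new)} \le \hat{q}$.

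The key step is an algebraic rewriting of the Mahalanobis distance under the covariance rescaling. Since $\hat{\mathcal{N}}_{t+1,cal}$ shares the mean $\hat{\mu}_{t+1}$ of $\hat{\mathcal{N}}_{t+1}$ and has covariance $\xi\hat{\Sigma}_{t+1}$, I would use $(\xi\hat{\Sigma}_{t+1})^{-1} = \xi^{-1}\hat{\Sigma}_{t+1}^{-1}$ to obtain $d_M^2(y;\hat{\mathcal{N}}_{t+1,cal}) = \xi^{-1} d_M^2(y;\hat{\mathcal{N}}_{t+1})$ for every $y$. Substituting $\xi = \nicefrac{\hat{q}^2}{\chi^2_{\dim(\mathcal{S}),\alpha}}$, the defining inequality of $\varepsilon_{cal}$ collapses to $d_M^2(y;\hat{\mathcal{N}}_{t+1}) \le \xi\,\chi^2_{\dim(\mathcal{S}),\alpha} = \hat{q}^2$, i.e. $d_M(y;\hat{\mathcal{N}}_{t+1}) \le \hat{q}$. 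This is precisely the crux of the argument: the scaling factor $\xi$ is chosen so that the calibrated probability-mass ellipsoid coincides exactly with the SplitCP threshold region cut out by $\hat{q}$, thereby aligning the hyperellipsoidal geometry of $\tilde f$'s output with the conformal quantile.

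Finally, recalling from \eqref{eq:nonConformityScore} that the test-point non-conformity score is $R^{(new)} = d_M(\tilde f(\bar{X}^{(new)}),Y^{(new)}) = d_M(Y^{(new)};\hat{\mathcal{N}}_{t+1})$, the membership event $\{Y^{(new)} \in \varepsilon_{cal}\}$ is identical to $\{R^{(new)} \le \hat{q}\}$. Applying the SplitCP rank-statistics guarantee $\mathbb{P}(R^{(new)} \le \hat{q}) \ge 1-\alpha$ then yields $\mathbb{P}(Y^{(new)} \in \varepsilon_{cal}) \ge 1-\alpha$, which is the desired conclusion. This last invocation is valid because $\bar{X}^{(new)} = g(X^{(new)})$ is exchangeable with $\bar{D}_{cal}$ (as $g$ is fixed) and $\tilde f$, $r$, and $w$ are all fixed before the residuals are computed.

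I do not anticipate a substantive obstacle, since the proof is essentially a one-line identity followed by a citation of the prior result; the only points requiring care are confirming that $\xi > 0$ so that the rescaled covariance stays positive definite and Lemma \ref{lemma:prob_mass_covariance} applies, and verifying that exchangeability of the transformed distributional inputs correctly transports the conformal guarantee from the point-input presentation of \S\ref{sec:splitCP} to the MVN-valued inputs consumed by $\tilde f$.
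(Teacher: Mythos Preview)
Your proposal is correct and follows essentially the same route as the paper: invoke Lemma~\ref{lemma:prob_mass_covariance} on the scaled Gaussian, unwind $(\xi\hat{\Sigma}_{t+1})^{-1}$ to rewrite the confidence region as $\{y: d_M^2(y;\hat{\mathcal N}_{t+1})\le \hat q^2\}$, and then apply the SplitCP guarantee $\mathbb P(R^{(new)}\le\hat q)\ge 1-\alpha$. Your extra remarks on $\xi>0$ and exchangeability of $\bar X^{(new)}$ are sound and slightly more explicit than the paper's version.
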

\begin{proof}
From Lemma \ref{lemma:prob_mass_covariance}, the $(1-\alpha)$ confidence region of {$\hat{\mathcal N}_{t+1,cal}(\hat \mu_{t+1}, \hat{\Sigma}_{t+1, cal})$ is}
\vspace{-16pt}
\begin{align*}
\varepsilon= &\ \{y\in \mathcal Y \colon (y-\hat \mu_{t+1})^\top(\hat{\Sigma}_{t+1,cal})^{-1}(y-\hat \mu_{t+1}) \le \chi^2_{\dim(\mathcal S),\alpha} \} \\
=&\ \{y\in \mathcal Y \colon (y-\hat \mu_{t+1})^\top(\xi \hat \Sigma_{t+1} )^{-1}(y-\hat \mu_{t+1}) \leq \chi^2_{\dim(\mathcal S),\alpha} \} \\
=&\ \{y\in \mathcal Y \colon (y-\hat \mu_{t+1})^\top\hat \Sigma^{-1}_{t+1} (y-\hat \mu_{t+1}) \leq \hat q^2 \} \\
=&\ \{y\in \mathcal Y \colon d_M^2(\hat{\mathcal N}_{t+1},y) \leq \hat q^2 \}
\vspace{-12pt}
\end{align*}
Additionally, by direct application of SplitCP we have that \break${\mathbb P(r(\hat{\mathcal N}_{t+1}, Y^{(new)}) \le \hat q) \ge (1-\alpha)}$. Thus $\mathbb P(d_M^2(\hat{\mathcal N}_{t+1}, Y^{(new)}) \le \hat q^2) \ge (1-\alpha)$ and so $\mathbb P(Y^{(new)} \in \epsilon) \ge 1-\alpha$. 
\end{proof}
Hence, scaling the approximate covariance matrix obtained from $\tilde f$ by the conformal scaling factor $\xi$ allows us to calibrate its uncertainty so that its $(1-\alpha)$ confidence region now contains at least $(1-\alpha)\%$ of the future states. This scaling factor acts as an interpretable measure of the model error of $\tilde f$. While the above enables the construction of a provably safe first prediction step, greater care must be taken when sequencing predictions over multiple steps. After the first prediction, the MVN passed as input to $\tilde f$ is not necessarily exchangeable with $\bar{D}_{cal}$ since it results from a different procedure, thus we do not provide a hard guarantee that the calibrated plans satisfy condition \eqref{eq:optSafety} for arbitrary approximate dynamics $\tilde f$. For guarantees about the multi-step case see Appendix \ref{app:proof_Multi_linear}. We empirically validate multi-step coverage in $\S 6$.

\vspace{-10pt}
\subsection{Local Calibration}\label{sec:local_adaptivity}
\vspace{-4pt}
While the procedure above guarantees marginal coverage, it uses the same scaling factor $\xi$ for all input state distributions and actions. To capture how the model error varies across the state-action space, we use the LOCART procedure first introduced in \cite{dtree_brazil}. First, $\bar{D}_{cal}$ is randomly split with a user-defined ratio into two disjoint sets $\bar{D}_{cal,part}$ and $\bar{D}_{cal,scale}$. The data in $\bar{D}_{cal,part}$ is used to fit a decision tree regressor that takes as inputs the features $\bar{X}^{(i)}$ and returns an estimate of the corresponding residual $R^{(i)} = d_M(\tilde f(\bar{X}^{(i)}), {Y}^{(i)})$. To accomplish this, the decision tree partitions the feature space into regions $\mathcal X_k$ of small residual variance using the CART algorithm \cite{breiman2017classification}. Effectively, this allows us to disambiguate regions of high and low model accuracy. \\ \indent
The second set $\bar{D}_{cal,scale}$ is used to perform conformal prediction on each individual feature space partition leading to $k$ different scaling factors $\xi_k$ (one per partition). Let $\mathcal T: \bar{\mathcal X} \mapsto \mathcal X_k$ map a state-action pair to its corresponding partition as generated from $\bar{D}_{cal,part}$. This is achieved by passing $\bar X^{(i)}\in D_{cal,scale}$ through the decision tree and observing the index $k$ of its leaf node. Practically, one can interpret the decision tree as assigning a weight of $1$ to calibration tuples whose features lie on the same leaf node as $\bar{X}^{(new)}$ and $0$ otherwise. That is, we now consider the empirical weighted residual distribution of Equation \eqref{eq:empirical_residual_F} with $w^{(i)}=\mathbbm 1_{\{ \mathcal T(\bar X^{(i)}) = \mathcal T(\bar X^{(new)})\}}, \forall i \in \{1, \ldots, \lvert D_{cal}\rvert, (new)\}$ and apply conformal prediction directly to it (after normalizing the weights such that they sum to one). Since the feature space partition is fixed before inference, $\bar{X}^{(new)}$ is still exchangeable with $\bar{D}_{cal,scale}$ and, as proved in \cite{dtree_brazil}, this leads to \textit{finite-sample local coverage guarantees}. Under additional assumptions that $\mathcal X_k$ contain "enough elements" and are "sufficiently small", \cite{dtree_brazil} shows that as $\lvert \bar{D}_{cal}\rvert \to \infty$ this procedure also achieves asymptotic conditional coverage. \\ \indent
We call the process of partitioning the state-action space and assigning a $\xi_k$ to each leaf node "fitting" the decision tree. During execution, each $\bar{X}^{(new)}$ can be passed through the decision tree to determine its leaf node and corresponding $\xi_k$. Naturally, higher dimensional state-action spaces will require more calibration tuples be to properly covered, so that LOCART can produce informative partitions. 
Also, the effective number of calibration tuples in local conformal calibration is partition-dependent and equal to the number of tuples reaching each partition's leaf node. Hence, the more splits the decision tree has, the more transitions will be needed to adequately cover all partitions.\\ \indent
In our implementation, we only pass the state mean and action as inputs to the decision tree, and thus do not separate between different initial uncertainties. Implicitly, the magnitude of the epistemic uncertainty must also be large enough when compared to the aleatoric uncertainty, to allow for informative feature space partitions. With this local procedure, our calibrated regions will not be undercovered for state-action pairs underrepresented in $D_{cal}$, might be more adaptive, and our planner has access to a scalar $\xi_k$ encapsulating the model's accuracy. As with SplitCP, the procedure above applies directly for one-step prediction no matter the form of $\tilde f$.

\vspace{-10pt}
\section{Experiments}
\vspace{-8pt}
Our experiments aim to validate LUCCa in a scenario where there is significant error in the approximate dynamics. We use a double-integrator system with the following state, action, and nominal dynamics as derived in \cite{cosner2023robust}:\vspace{-6pt}
\begin{equation}
    \label{eq:white_dynamics}
    s_{t+1} = A s_t + B u_t + d_t = \begin{bmatrix} I_{2\times 2} & \Delta t I_{2\times 2} \\ 0_{2\times 2} & I_{2\times 2}
    \end{bmatrix} s_t + \begin{bmatrix}
        \frac{\Delta t^2}{2} I_{2\times 2} \\ 
        \Delta t I_{2\times 2}
    \end{bmatrix} u_t + d_t
    \vspace{-6pt}
\end{equation}
where the state $s_t=[p_{x,t}, p_{y,t}, v_{x,t}, v_{y,t}]^\top$ is composed of 2D positions and velocities, the action $u_t = [a_{x,t}, a_{y,t}]^\top$ of 2D accelerations, $\Delta t=0.05$ sec is the integration step, and $d_t \sim Q=\mathcal N(0_{4\times 1}, 0.1 B B^\top)$ an aleatoric noise term. Note that this system can build significant momentum and thus underestimating the uncertainty and coming too close to obstacles may lead to collision, even if the predictor correctly captures the uncertainty for the next time step (i.e. if no action can counteract the momentum completely within one step). Thus, it is important to accurately predict the uncertainty multiple time steps into the future to avoid collision. We also include regions where the true dynamics abruptly shift, i.e. representing lower friction. Hence, $f$ is hybrid. In these regions, the robot dynamics are given by
\vspace{-4pt}
\begin{equation}\label{eq:yellow_dynamics}
    s_{t+1} = \begin{bmatrix} I_{2\times 2} & 1.3\Delta t I_{2\times 2} \\ 0_{2\times 2} & I_{2\times 2}
    \end{bmatrix} s_t + 1.3 B u_t + d_t
    \vspace{-4pt}
\end{equation}
\looseness-1 We conduct tests in four environments containing regions (white areas of Fig \ref{fig:all_runs}) where the approximate dynamics exactly matches the true dynamics ($\text{Eq \eqref{eq:white_dynamics}}$), and regions (yellow areas in Fig \ref{fig:all_runs}) where the true and approximated dynamics are significantly different, approximate is $\text{Eq \eqref{eq:white_dynamics}}$ and real is $\text{Eq \eqref{eq:yellow_dynamics}}$. An effective uncertainty predictor would predict lower uncertainty in the white regions and higher uncertainty (due to epistemic uncertainty) in the yellow region.
\begin{figure}[t!]
\makebox[\textwidth][c]{\includegraphics[width=\textwidth]{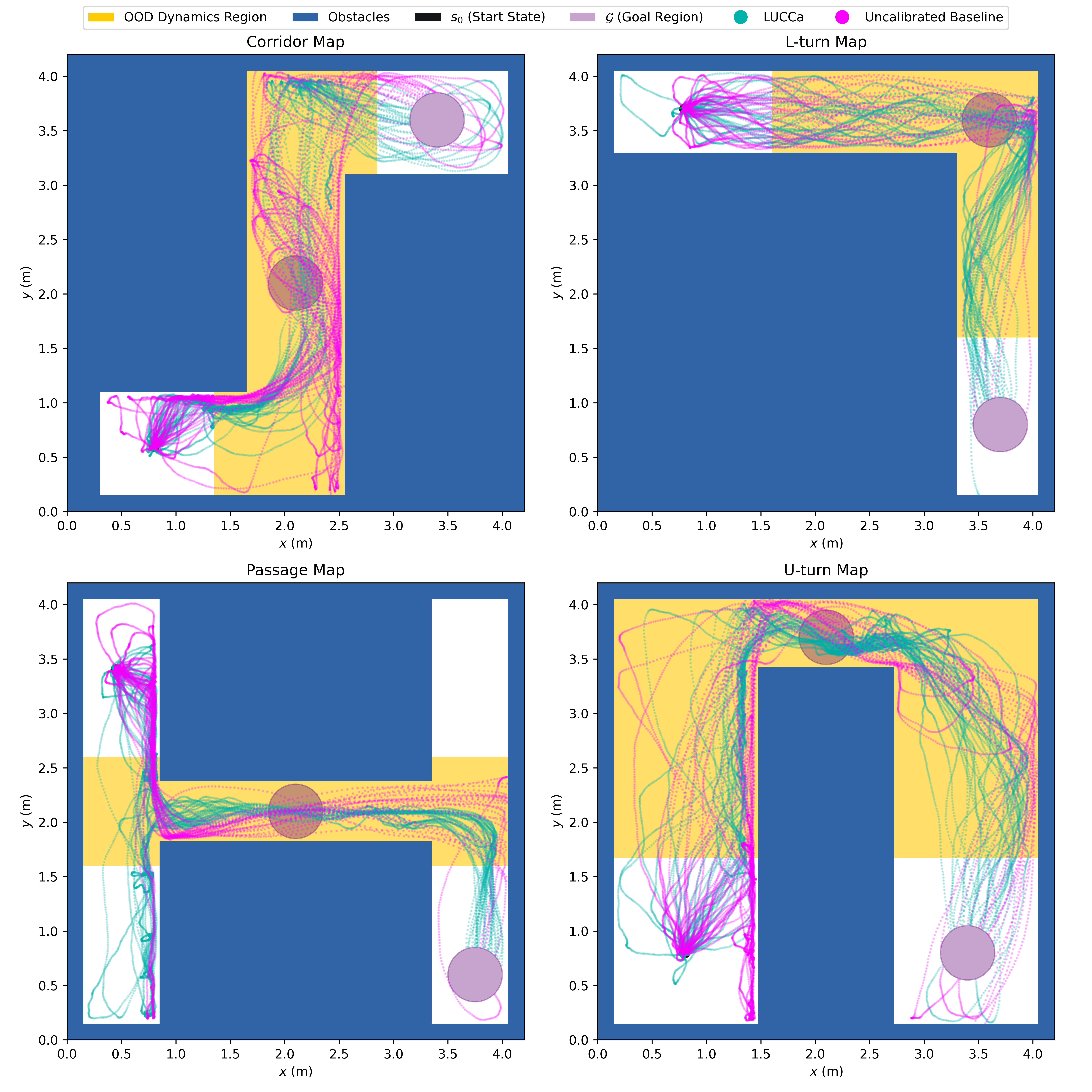}}
\vspace{-12pt}
\caption{Executed trajectories of LUCCa and baseline across four environments. The intermediate sub-goals help simplify the cost landscape. The baseline cannot account for the epistemic uncertainty and gains too much momentum in its pursuit of the sub-goals, making future collisions inevitable. LUCCa accounts for local shifts in model error, making safer turns under both aleatoric and epistemic uncertainty.} \label{fig:all_runs}
\end{figure}
We construct $D_{cal}$ by rolling out state-action pairs on a uniform grid for one step using $f$. To appropriately cover the space, we use a grid where $v_x,v_y\in \{-1, -\frac{1}{2}, 0, \frac{1}{2}, 1\}$, $a_x,a_y\in \{-0.8, -0.4, 0, 0.4, 0.8\}$ and $p_x, p_y \in {\{0, \frac{4.2}{15}, \frac{8.4}{15}, \ldots, 4.2\} \cap \mathscr C}$. This ensures we appropriately cover the entire state-action space (with $\lvert \bar{D}_{cal}\rvert=53'750$ for the Corridor map). In LOCART, we set the random split such that $\lvert \bar{D}_{cal,part} \rvert= \lvert \lceil0.8\bar{D}_{cal}\rceil \rvert$, set the decision tree max depth to $13$ and the minimum number of samples required for a split to $40$. Empirically, if too small a ratio of calibration samples is used for feature space partition, the resulting $\xi_k$ plots might not (due to randomness in decision tree regressor and presence of aleatoric noise) properly match the true changes in model accuracy. For the parameters and $\lvert \bar{D}_{cal}\rvert$ above, calculating all the residuals and fitting the decision tree (creating the state-action space partitions and assigning a conformal scaling factor to each one) takes approximately $0.17$ secs on an Intel i9-12900K. Note this must only be performed once prior to planning (according to Algorithm \ref{alg:LUCA}). In Appendix \ref{app:visualize_xi_k}, we show some $\xi_k$ obtained after fitting LOCART on each environment, and relate these scaling factors to the regions of high/low epistemic uncertainty. \\ \indent
The remainder of this section describes the two types of experiments we conducted. First, we sought to verify our claim that LUCCa produces a set that contains at least $(1-\alpha)$ percent of the outputs of the true dynamics for the first planning step. Additionally,
although the presented system with bounded control inputs and $\dim(\mathcal S)>\dim(\mathcal U)$ is more complex than those for which we provably extended LUCCa's safety guarantees to the multi-step case (Appendix \ref{app:proof_Multi_linear}), we wanted to test whether LUCCa could still provide $(1-\alpha)$ percent safety along each step of its plan. Second, we conducted experiments on an MPC controller that uses LUCCa to plan short-horizon trajectories to reach a goal. For both experiments, we compare to an uncalibrated baseline, i.e. directly using the uncertainty produced by the approximate dynamics, to show the importance of considering epistemic uncertainty. We use $\alpha=0.1$ for all experiments.
\vspace{-6pt}
\subsection{Empirical Coverage}
\begin{wrapfigure}[18]{r}{0.55\textwidth}
    \centering
    \vspace{-42pt}
    \includegraphics[width=0.5\textwidth]{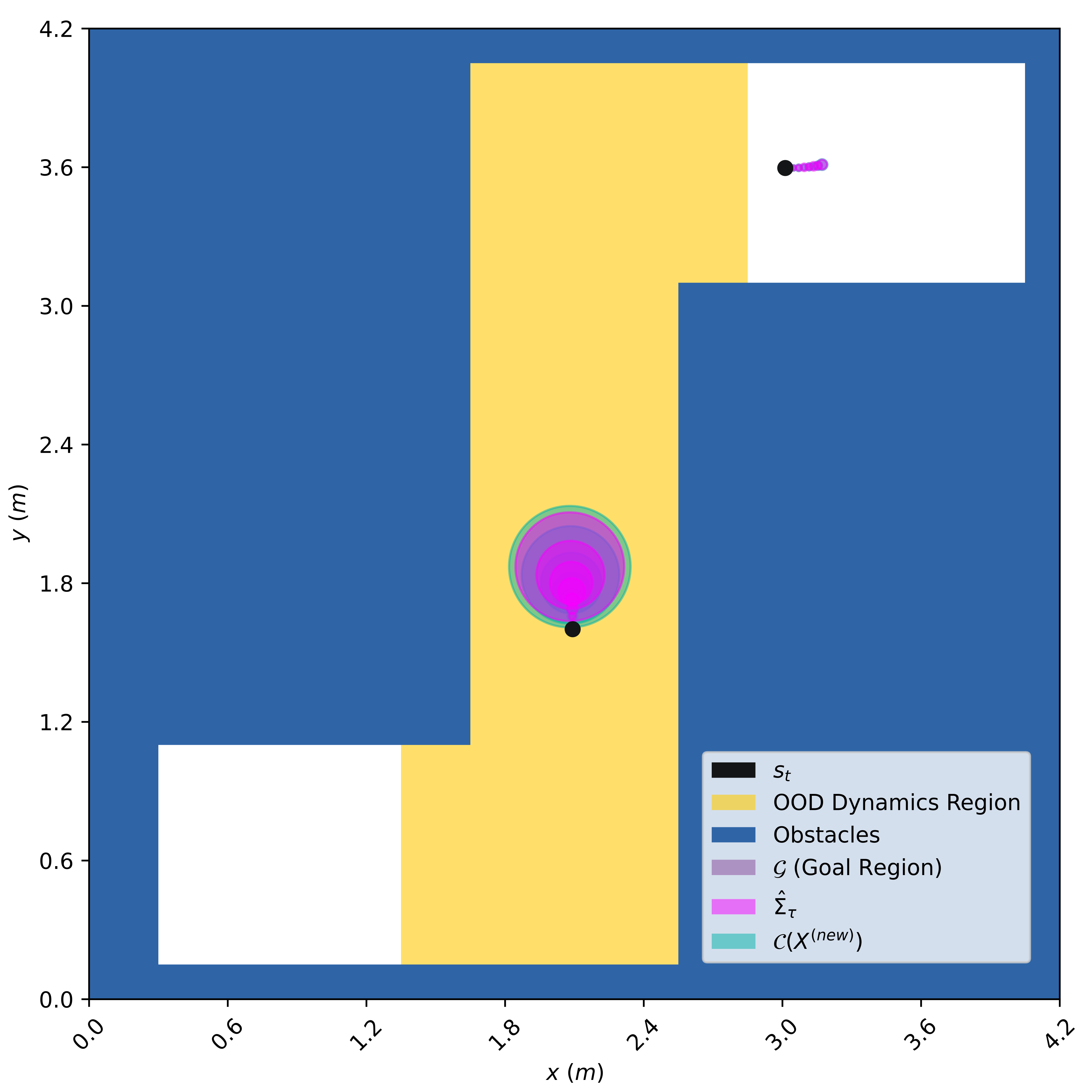}
    \caption{We show two optimal plans starting within the white and yellow regions. These are the starting states $s_0$ for the empirical coverage calculation.}
    \label{fig:trajectory}
    \vspace{-80pt}
\end{wrapfigure}
To verify the central claim of this paper, that in the first planning step LUCCa outputs a set that contains at least $(1-\alpha)$ percent of the outputs of the true dynamics, we conducted an empirical coverage evaluation in the Corridor environment. Starting from two states (one in the white region $s_0=[3.0, 3.6, 0.5, 0.0]^\top$ and one in the yellow $s_0=[2.1, 1.5, 0.0, 0.5]^\top$), in the environment in Fig. \ref{fig:trajectory}, we applied a sequence of constant either accelerating or decelerating controls for the duration we use as our planning horizon. We accelerate and decelerate in the yellow region by $\pm 0.5$ m/s$^2$, and in the white region by $\pm 0.8$ m/s$^2$. We compare the ratio of $2500$ samples that fall within the predicted uncertainty region for each time step for LUCCa vs. the baseline in Fig. \ref{fig:empirical_coverage}. For the baseline, the prediction region at each step can be obtained analytically by following the update rule $\hat{\Sigma}_{\tau + 1} = A \hat{\Sigma}_\tau A^\top + Q$ and applying Lemma \ref{lemma:prob_mass_covariance} to the resulting distribution. 
The figure shows that at least 90\% of samples are within LUCCa’s first-step prediction regions in all cases, as expected, though LUCCa does appear to be more conservative than necessary (especially in later time steps).
It also validates that, despite the tested system being more complex than those for which we proved multi-step safety, LUCCa appears to also provide multi-step local coverage for linear approximate dynamics.
Conversely, the baseline, which does not consider epistemic uncertainty, significantly under-predicted the uncertainty for the state where the dynamics were inaccurate. For both states where $f=\tilde f$, we see that the baseline provides valid prediction regions (around $90\%$ of the samples are contained within the $90\%$ confidence hyperellipsoid) and LUCCa is slightly over-conservative. For the states where there's significant model inaccuracy, LUCCa's regions remain empirically valid for all planning steps while the baseline ones underestimate uncertainty in the true future states (with decaying validity over time). 
\vspace{-10pt}
\begin{figure}[t]
  \makebox[\textwidth][c]{\includegraphics[width=0.9\textwidth]{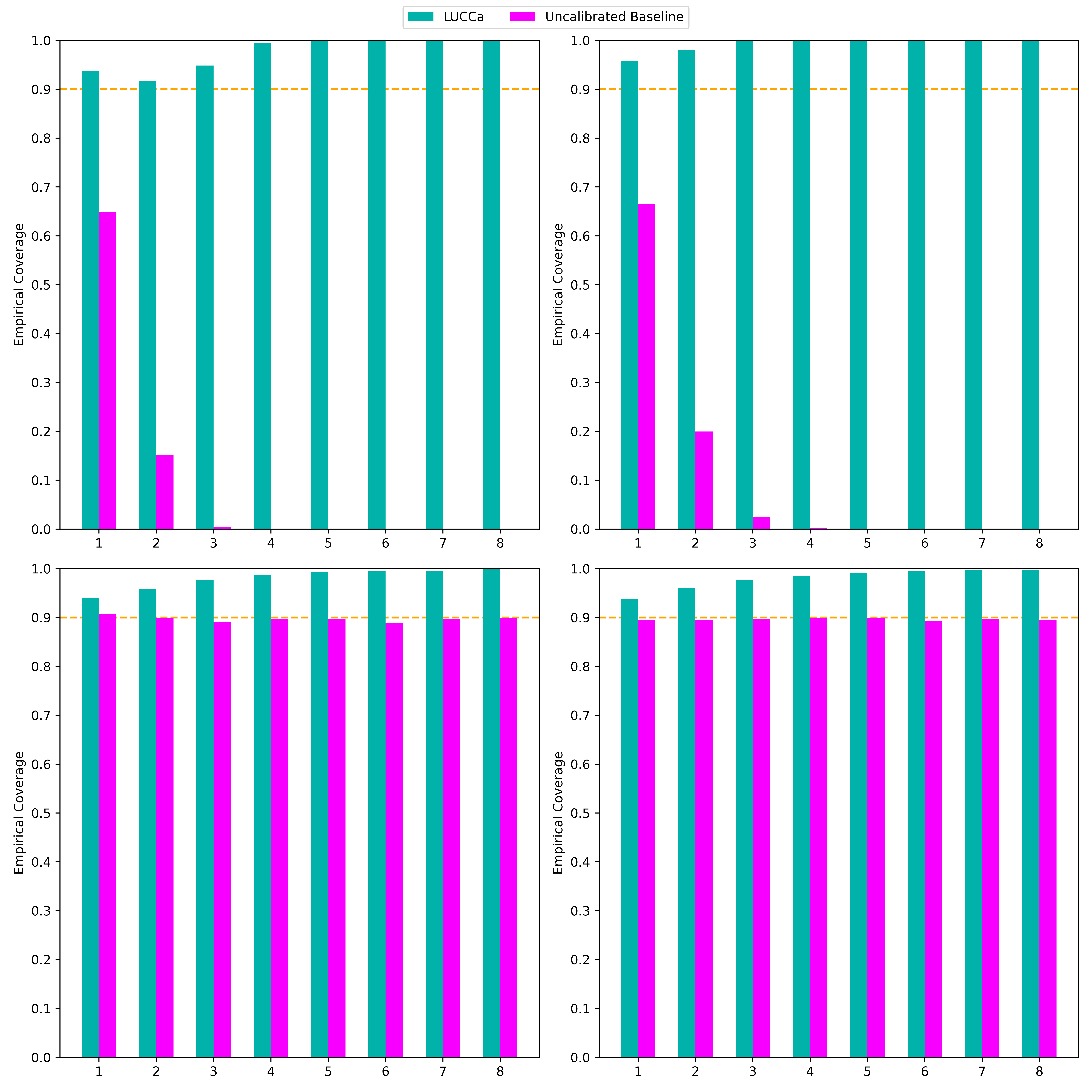}}
\caption{Empirical coverage of LUCCa (teal) and an uncalibrated baseline (pink) in four examples. Top: Decelerating (left) and  Accelerating (right) in a region with epistemic uncertainty. Bottom: Same for a region with no epistemic uncertainty. LUCCa's estimate is conservative -- always above the user-defined 90\% threshold. The uncalibrated baseline significantly under-estimates the uncertainty when $\tilde f$ is inaccurate.} \label{fig:empirical_coverage}
\end{figure}

\subsection{Using LUCCa's Predictions for Motion Planning}

To evaluate the usefulness of LUCCa's uncertainty predictions for motion planning, we conducted an experiment where the double-integrator had to traverse the environments in Fig. \ref{fig:all_runs} to reach a goal. We used MPPI \cite{mppi} to control the robot with $\lambda=1$, $4096$ sampled trajectories, control bounds $u_\tau \in [-0.9, 0.9]\times [-0.9, 0.9]$ and control noise $I_{2\times 2}$. If the horizon $H$ is too small, the planner can act myopically (reaching high-speeds and possibly entering regions of inevitable collision), however if $H$ is too high then the calibration regions can become excessively large making all paths probabilistically unsafe (due to successively scaling by $\xi_k$). $H=8$ empirically achieved a satisfactory balance. We use as objective function \vspace{-6pt}
\begin{equation}\label{eq:MPPI_cost}
    J(s,u,\mathcal G) = 1.15d_E(s_{t+H}, \mathcal G) + \sum_{\tau=t+1}^{t+H-1} 1.1d_E(s_{\tau}, \mathcal G) + 1.8\tr(\hat{\mathcal N}_{\tau, cal}) + 10^5 \mathbbm 1_{col}(\hat{\mathcal N}_{\tau, cal})
    \vspace{-3pt}
\end{equation}
\looseness-1 where $d_E$ is the Euclidean distance to the center of $\mathcal G$, $\tr(\cdot)$ the trace of a matrix and $\mathbbm 1_{col}$ is an indicator function determining collisions between the $(1-\alpha)$ confidence region of $\hat{\mathcal N}_{\tau, cal}$ an the known environmental obstacles (soft safety constraint). If the obstacles can be decomposed into rhomboids, $\mathbbm 1_{col}$ can be simplified to a triangle-circle check after applying a Mahalanobis Transform with $\hat{\mathcal N}_{\tau, cal}$ to the environment. The trace term penalizes states of greater total uncertainty. In practice, we observed this leads to the system slowing down when subject to greater model mismatch. Empirically, normalizing the terminal distance-to-goal cost across the sampled MPPI trajectories led to reaching the goal in fewer steps. Each environment contained a sub-goal used to simplify the cost-landscape. Upon entering the sub-goal region, the controller's goal was updated to be the map's second sub-goal region. We define a successful run as one where the system reaches the second sub-goal without ever colliding with any obstacles. \\ \indent Figure \ref{fig:all_runs} compares the trajectories produced by LUCCa and the baseline across the four environments. Videos of some executions can be found on our companion website \href{https://um-arm-lab.github.io/lucca}{\textbf{\color{umich}https://um-arm-lab.github.io/lucca}}. The results in Table \ref{tab:planning} suggest that using LUCCa's uncertainty estimate improves the success rate in these scenarios by avoiding collision. However, LUCCa only guarantees that $(1-\alpha)$ percent of true states will be collision-free, so it does not provide a hard guarantee that the planned actions will result in collision-free states. We also note that the baseline did reach the goal faster when it didn't collide. This is because its trace cost term is smaller and independent of the system's current position, leading to greater velocities overall. On an Intel i9-12900K processor, the calibration process in LUCCa (passing MPPI sample through the decision tree to determine its leaf node and calibrating the approximate covariances with the corresponding scaling factor) was found to introduce a $0.3$ ms overhead on each planning step (averaged over $91000$ planning steps). While this represents a $32\%$ increase in the planning step duration, the magnitude of the additional computations is neglectable in practice (and could be further mitigated by GPU processing).

\begin{table}
\begin{center}
\caption{Comparison of LUCCa vs baseline in four environment over 30 runs (each).}
\begin{tabular}{ |c|c|c|c|c| } 
\hline
\multirow{2}{*}{Environment} & \multirow{2}{*}{Prediction Method} & \multirow{2}{*}{Collision Rate $\downarrow$} & \multirow{2}{*}{Success Rate $\uparrow$} & Avg. Steps \\ & & & & to Goal (std) $\downarrow$\\
\hline
\multirow{2}{*}{Corridor} & LUCCa & \textbf{0.00} & \textbf{1.00} & \textbf{385.83} (102.58) \\ 
& Uncalibrated Baseline  & 0.30 & 0.70 & 410.14 (137.24) \\
\hline
\multirow{2}{*}{L-Turn} & LUCCa & \textbf{0.03} & \textbf{0.97} & 363.48 (70.00) \\ 
& Uncalibrated Baseline  & 0.80 & 0.20 & \textbf{292.17} (39.12) \\ 
\hline
\multirow{2}{*}{Passage} & LUCCa & \textbf{0.00} & \textbf{1.00} & 505.33 (192.34) \\ 
& Uncalibrated Baseline  & 0.83 & 0.17 & \textbf{367.20} (139.11) \\ 
\hline
\multirow{2}{*}{U-Turn} & LUCCa & \textbf{0.00} & \textbf{1.00} & 636.67 (109.65) \\ 
& Uncalibrated Baseline  & 0.40 & 0.50 & \textbf{350.07} (48.33) \\ 
\hline
\end{tabular}
\label{tab:planning}
\end{center}
\end{table}

\vspace{-10pt}
\section{Conclusion}
\vspace{-6pt}
This paper presented LUCCa, a method based on conformal prediction that accounts for both aleatoric and epistemic uncertainty in the dynamics. The method uses a calibration set of data to locally estimate a scaling factor, which is applied to the output of the approximate dynamics predictor. Our experiments on a double-integrator system corroborate our analysis, showing that LUCCa outputs a set containing at least $(1-\alpha)$ percent of the results of the true dynamics for one-step predictions. Further, we prove and empirically observe that for linear approximate dynamics, under additional controller assumptions, LUCCa can also provide local coverage for the remaining planning steps. We also show that LUCCa's predictions are effective for safely guiding an MPC method to reach a goal.
\vspace{-0.8em}
\begin{credits}
\subsubsection{\ackname}This work was supported in part by the Office of Naval Research Grant N00014-24-1-2036 and NSF grants IIS-2113401 and IIS-2220876.
\vspace{-1em}
\subsubsection{\discintname} The authors have no competing interests to declare that are relevant to the content of this article.
\end{credits}

\bibliographystyle{splncs04}
\bibliography{references}

\newpage
\appendix
\section{Proving LUCCa's Multi-Step Coverage with Linear Dynamics Approximators} \label{app:proof_Multi_linear}

Our proposed algorithm (LUCCa) used MPPI \cite{mppi}, a sampling-based shooting-style MPC scheme, to compute ``optimal'' control input sequences that balance goal progression, obstacle avoidance, and uncertainty reduction - see Equation \eqref{eq:MPPI_cost} and Algorithm \ref{alg:LUCA}. Empirically, MPPI has been shown to be effective for a wide range of approximate dynamics forms (e.g. nonlinear, non-differentiable) and constraint types (e.g. nonlinear). Consequently, the quality of MPPI's output is highly dependent on the structure of the problem at hand, which we kept broad in the main paper, and on the available compute budget (e.g. number of sampled trajectories, planning horizon, range of control disturbances). Therefore, it is difficult to derive general claims about safety resulting from MPPI plans. To make progress, we will make the simplifying assumption that the MPPI ``optimal'' plan acts as a one-step feedback controller correcting for previous prediction errors. Under this assumption, if the approximate dynamics are linear-Gaussian, $\xi\ge 1$, and the system has enough control authority, we can use a procedure similar to \cite{knuth2021planning} to show that the one-step errors resulting from applying the feedback law are always exchangeable with the errors observed at the first planning step. Then, by applying the conformal calibration procedure derived for the first planning step (Theorem \ref{thm:cal1step} of the main paper), we show how we can extend the first-step safety guarantees for the remaining planning steps. We empirically observe that LUCCa can achieve higher safety rates (as shown in the experiments), since it is generally over-conservative compared to this simplified one-step feedback law.

Recall the \textit{unknown} time-invariant map $f$ describing the true system dynamics, previously presented in Equation \eqref{eq:dynamics} of the main paper
\begin{equation*}
    s_{t+1} \sim f(s_t, u_t)
\end{equation*}
Without loss of generality, we will now rewrite $s_{t+1} = f(s_t, u_t, \eta)$,
where $\eta$ is a stochastic disturbance drawn from an unknown time-invariant distribution.
The approximate dynamics $\tilde f$ is defined to propagate a Multivariate Normal (MVN) estimate of the true state, which we denote as $\mathcal N(\hat \mu_{t+1}, \hat \Sigma_{t+1}) = \tilde f(\hat \mu_t, \hat \Sigma_t, u_t)$.
Note that both the true $f$ and approximate dynamics $\tilde f$ take in the same control input $u_t$.

Let us now define the prediction error at time $t$ as $e_t \coloneqq s_t - \hat \mu_t$,
where $s_t$ is the true system state and $\hat \mu_t$ the expectation of the approximate dynamics prediction. 
Following Assumption \ref{assump:dataset} of the main paper, $e_0=0$ and hence we can define the first-step error as
\begin{align*}
    e_1 &= s_1 - \hat \mu_1 \\
    &= f(s_0, u_0, \eta) - \mathbb E[\tilde f(\hat \mu_0, \Sigma_0, u_0)]\\ 
    &= f(s_0, u_0, \eta) - \mathbb E[\tilde f(s_0, \Sigma_0, u_0)]
\end{align*}
where $(\hat \mu_0,\Sigma_0)$ originates from the fixed mapping $g$ defined in Equation \eqref{eq:state2MVNmap}
and $\hat \mu_0 = s_0$ since $e_0=0$. 
This error accounts for both model mismatch (epistemic uncertainty) and noise (aleatoric uncertainty).
In the main paper, we assumed access to a dataset of first-step transitions $D_{cal}=\{(s_0, u_0, s_{1})^{(i)}\}_{i=1}^{\lvert D_{cal} \rvert}$
from which we computed the Mahalanobis distance between the approximate MVN $\mathcal N (\hat \mu_{1}, \hat \Sigma_{1})$ and the true state $s_{1}$ - as detailed in $\S$\ref{sec:calibrate}.
Then, in Theorem \ref{thm:cal1step}, we used conformal prediction to find a scalar $\xi \in \mathbb R$ to calibrated the approximate MVNs, such that, averaged over the distribution of states and actions,
\begin{equation*}
    \mathbb P(e_1^\top  (\xi \hat \Sigma_{1})^{-1}e_1 \le \chi^2_{\dim(\mathcal S),\alpha}) \ge (1-\alpha)
\end{equation*}
\looseness-1This guarantee provided that, after the first planning step, $(1-\alpha)\%$ of the probability mass of the true states $s_1$ were contained in the hyperellipsoid
corresponding to the $(1-\alpha)\%$ confidence region of the calibrated approximate dynamics prediction $(\hat \mu_1, \xi \hat \Sigma_1)=\tilde f_{cal}(\hat \mu_0, \Sigma_0, u_0)$.
We now aim to derive similar probability guarantees for the remainder of the planning horizon.
Concretely, we aim to construct prediction regions whose $(1-\alpha)\%$ confidence ellipsoids contain $(1-\alpha)\%$ of the probability mass of the true states $s_t$ for $t\ge 2$.
Jointly with MPC-enforced safe region constraints, this would enable us to provide probabilistic safety along the entire plan - as described by Equation \eqref{eq:alternative_safety} of the main paper.

While in the first planning step the approximate $\tilde f$ and true dynamics $f$ rollout from the same state $s_0 = \hat \mu_0$,
this is not necessarily the case for the following steps.
In general, $s_t \ne \hat \mu_t$ for $t\ge 1$ and so, from the second step onwards, the approximate and true dynamics rollouts will have different starting points.
This can cause an accumulation of one-step errors, resulting in $e_t$ for $t \ge 2$ not being necessarily exchangeable with $e_1$, and hence the conformal scaling factor $\xi$ being possibly insufficient to ensure coverage. If we could modify the control inputs $u_t$ to account for previous errors such that $e_t$ became exchangeable with $e_1$, then the first-step calibrated covariance $\hat \Sigma_{1,cal}$ would provide the desired coverage at every step (by Theorem \ref{thm:cal1step}). We will show how this might be achieved if $\tilde f$ is linear-Gaussian, the system has sufficient control authority, and the planner acts as a one-step feedback controller. 
\begin{assumption}
    The approximate dynamics $\tilde f$ are linear-Gaussian, i.e. \\$\mathcal(\hat \mu_{t+1}, \hat \Sigma_{t+1}) = \tilde f(\hat \mu_t, \hat \Sigma_t, u_t) = (\tilde A \hat \mu_t + \tilde B u_t, \tilde A \hat \Sigma_t \tilde A^\top + \tilde Q)$, where $\tilde A, \tilde B, \tilde Q$ are matrices of approximate dimension and $\tilde Q \ge 0$.
\end{assumption}
Note that the true unknown dynamics $f$ may still be nonlinear and non-Gaussian.
The calibrated approximate dynamics update $\tilde f_{cal}$, defined in Algorithm 2, can then be rewritten as
\begin{align*}
\hat\mu_{t+1} &= \tilde A \hat \mu_t +\tilde B u_t \\
\hat \Sigma_{t+1, cal} &= \xi(\tilde A \hat \Sigma_{t, cal}\tilde A^\top + \tilde Q) 
\end{align*}
where $\hat \Sigma_{0,cal} = \Sigma_0$ as set by the map $g$. We can now rewrite the first-step error as
\begin{align*}
    e_1 &= s_1 - \hat \mu_1 \\
    &= f(s_0, u_0, \eta) - \mathbb E[\tilde f(\hat \mu_0, \Sigma_0, u_0)] \\
    &= f(s_0, u_0, \eta) - (\tilde A \hat \mu_0 + \tilde B u_0) \\
    &= f(s_0, u_0, \eta) - (\tilde A s_0 + \tilde B u_0)
\end{align*}

\begin{figure}[t]
\centering
\includegraphics[width=0.7\textwidth]{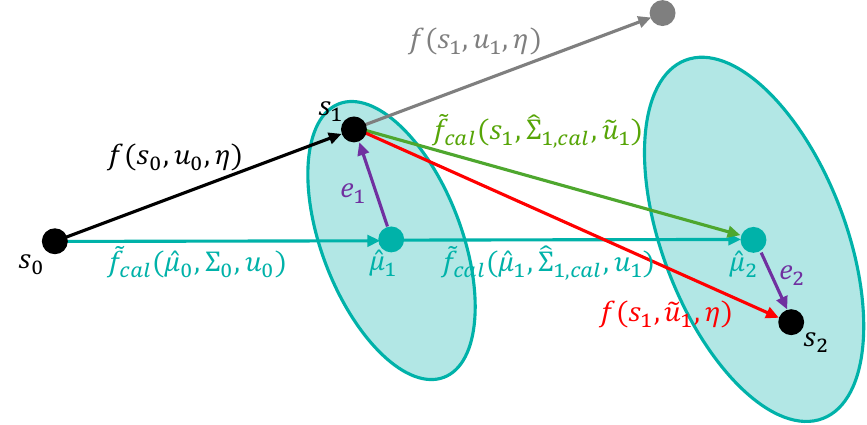}
\caption{First two planning steps with conformal calibration and the one-step feedback law.
The true system evolves according to $f$ with examples of possible states shown in black.
The ``optimal'' plan is shown in teal, along with the $(1-\alpha)\%$ confidence regions obtained from recursively applying $\tilde f_{cal}$.
After the first planning step, if we keep applying the ``optimal'' control inputs $u_t$ in an open-loop fashion, the error (purple) between the true system state and the approximate dynamics can accumulate - case shown in light gray.
Instead, by applying the ``corrected'' control input $\tilde u_1$ obtained via the one-step feedback law (green and red), we ensure that the error at times $t\ge 2$ remains exchangeable with the one-step error.
Hence, we can maintain the original first-step coverage guarantees for the remaining planning steps when such a feedback law exists.
}
\label{fig:multi_step_u_correct}
\end{figure}
Similarly to \cite{knuth2021planning}, let us now analyze the conditions for the existence of a one-step feedback law that corrects for the drift $e_t$ between the planned and executed trajectories at every step.
While in our ``optimal'' plan, each control input $u_t$ progresses the expectation of the approximate dynamics from $\hat \mu_t$ to $\hat \mu_{t+1}$,
during execution, the system may be at a different state $s_t \ne \hat \mu_t$. 
A one-step feedback law would sense the current state $s_t$ and apply a ``corrected'' control input $\tilde u_t \coloneqq u_t + \Delta u_t$
that drives the planned system from $s_t$ to the next planned point $\hat \mu_{t+1}$, i.e.
find $\tilde u_t$ such that $ \mathbb E [\tilde f_{cal}(s_t, \hat \Sigma_{t,cal}, \tilde u_t)]= \hat \mu_{t+1} \coloneqq \mathbb E[\tilde f_{cal}(\hat \mu_t, \hat \Sigma_{t,cal}, u_t)]$. A visualization of this process is shown in Figure \ref{fig:multi_step_u_correct}. Such a law would prevent the accumulation of error along the planned trajectory.

Since $\tilde f$ is linear, modifying the control input via the one-step feedback law will not change the predicted covariance $\hat \Sigma_t$.
Given this objective, we can solve for the required $\Delta u_t$ at each step using
\begin{align}
    \mathbb E [\tilde f_{cal}(s_t, \hat \Sigma_{t,cal}, \tilde u_t)] &= \mathbb E [\tilde f_{cal}(\hat \mu_t, \hat \Sigma_{t,cal},  u_t)]  \nonumber\\
    \tilde A s_t + \tilde B \tilde u_t &= \tilde A \hat \mu_t + \tilde B u_t  \nonumber\\
    \tilde A (\hat \mu_t + e_t) + \tilde B (u_t + \Delta u_t) &= \tilde A \hat \mu_t + \tilde B u_t \nonumber\\
    \tilde A \hat \mu_t + \tilde B u_t + \tilde A e_t + \tilde B \Delta u_t  &= \tilde A \hat \mu_t + \tilde B u_t \nonumber\\
    \Aboxed{\tilde A e_t + \tilde B \Delta u_t &= 0}
\end{align}
If $\dim(\mathcal U) \ge \dim(\mathcal S)$ and $\tilde B$ is full rank, then we can use the Moore-Penrose pseudo-inverse of $\tilde B$, denoted $\tilde B^{\dagger}$, to obtain
$\Delta u_t = - \tilde B^{\dagger} \tilde A e_t$ as the solution to the equation above.
Then, we can write the one-step feedback law as
\begin{equation*}
    \tilde u_t = u_t - \tilde B^{\dagger} \tilde A (s_t - \hat \mu_t)
\end{equation*}
We now make the simplifying assumption that the MPPI-computed plans correct for previous prediction errors, allowing us to make all the one-step errors exchangeable with $e_1$.
\begin{assumption}
    The ``optimal'' plan obtained by MPPI acts as the one-step feedback law defined above, correcting for previous prediction errors. 
\end{assumption}

By applying the one-step feedback law during inference, we have that error at time $t\ge 2$ then becomes
\begin{align*}
    e_{t+1} &= s_{t+1} - \hat \mu_{t+1} \\
    &= s_{t+1} - \mathbb E [\tilde f_{cal}(\hat \mu_t, \hat \Sigma_{t,cal},  u_t)] \\
    &= s_{t+1} - (\tilde A \hat \mu_t + \tilde B u_t)\\
    &= s_{t+1} - (\tilde A \hat \mu_t + \tilde B u_t + \smash{\overbracket{\tilde A e_t + \tilde B \Delta u_t}^{=0}})\\
    &= s_{t+1} - (\tilde A \hat \mu_t + \tilde A (s_t -\hat \mu_t) + \tilde B (u_t + \Delta u_t)) \\
    &= f(s_t, \tilde u_t, \eta) - (\tilde A s_t + \tilde B \tilde u_t)
\end{align*}
which has the same form as the first-step error, since the approximate and true dynamics have the same inputs.
Hence, by applying the one-step feedback law, we can make $e_t, t\ge2$  exchangeable with $e_1$.
From Theorem \ref{thm:cal1step}, it then follows that the $(1-\alpha)\%$ confidence region of the MVN centred at $\hat \mu_{t}$ with covariance $\hat \Sigma_{1,cal}=\xi\hat \Sigma_1$
will contain at least $(1-\alpha)\%$ of the probability mass of the true states $s_t$. This allows us to extend LUCCa's one-step safety guarantees to be valid across multiple steps.
\begin{theorem}
    Let $\varepsilon_{\hat{\mathcal N}(\hat{\mu}_{t}, \hat{\Sigma}_{1,cal}),\alpha}$ denote the $(1-\alpha)\%$ confidence hyperellipsoid of the MVN centered at $\hat \mu_t$ with covariance $\hat \Sigma_{1,cal}$. If the planner acts as a one-step feedback controller, $\dim(\mathcal U )\ge\dim(\mathcal S)$, and $\tilde B$ is full rank, then the 
    true states $s_t$ are contained in the $(1-\alpha)\%$ confidence region of $\mathcal N(\hat \mu_t, \hat \Sigma_{1,cal})$ with probability at least $(1-\alpha)$ at each time-step $t\in \{1,\ldots, T\}$.
\end{theorem}
\begin{proof}
    For $t=1$, this is ensured by conformal prediction as shown in Theorem \ref{thm:cal1step} of the main paper.
    For $t\ge 1$, if $\dim(\mathcal U )\ge\dim(\mathcal S)$ and $\tilde B$ is full rank then $\tilde u_t$ exists and is exact. By sensing $s_t$, we can calculate $\tilde u_t$ at every step.
    If the planner acts as a one-step feedback controller producing $\tilde u_t$, we can apply the input to the true and approximate systems, correcting for $e_t$ and making $e_{t+1}$ exchangeable with $e_1$.
    Then by Theorem \ref{thm:cal1step}, it follows that $\mathbb P(s_{t+1} \in \varepsilon_{\hat{\mathcal N}(\hat{\mu}_{t+1}, \hat{\Sigma}_{1,cal}),\alpha}) \ge (1-\alpha)$.
\end{proof}
Recall that LUCCa's calibrated predicted covariance at time $t$ is $\tilde \Sigma_{t,cal} = \sum_{i=0}^{t} \xi_1^i \tilde A^i \Sigma_0 (\tilde A^i)^\top + \sum_{i=0}^{t-1} \xi_1^{i+1} \tilde A^i \tilde Q (\tilde A^i)^\top$, which is generally overly conservative relative to the one-step calibrated covariance $\hat \Sigma_{1,cal}$ if $\xi\ge 1$. This conservative compounding of calibration factors was chosen to mitigate the impact of the planner possibly not acting as an exact one-step feedback controller.  
If the system has control input bounds, then one might also need to consider whether $\tilde u_t$ is admissible at each point along the plan. The proof above used a single global scaling factor $\xi$. We leave an extension guaranteeing multi-step local coverage for future work.

\newpage
\section{Visualizing Conformal Calibration Factors across environments} \label{app:visualize_xi_k}

In Figures \ref{fig:conformal_scaling_factor_plot}, \ref{fig:scaling_factor_uturn}, \ref{fig:scaling_factor_passage} and \ref{fig:scaling_factor_lturn}, we showcase how the local scaling factors $\xi_k$ vary with changes in horizontal velocity $v_x$ across the four environments of Figure \ref{fig:all_runs}. The colormaps were obtained by querying the fitted decision trees with $p_x, p_y \in (0, 4.2)$, $v_y=a_x=a_y=0$ and $v_x \in \{-2, 0, -2\}$. The horizontal velocity increases from left to right. As expected, the regions of higher $\xi_k$ correspond to the yellow region (higher epistemic uncertainty). While we note that $\xi_k$ are different for $v_x=-2$ m/s and $v_x=2$ m/s, this may result from aleatoric disturbances or the inherent randomness of the $\bar{D}_{cal}$ split in $\S$\ref{sec:local_adaptivity} (i.e. which samples were used for partitioning the feature space and which were used for determining the scaling factors).

\begin{figure}[H]
  \makebox[\textwidth][c]{\includegraphics[width=1.2\textwidth]{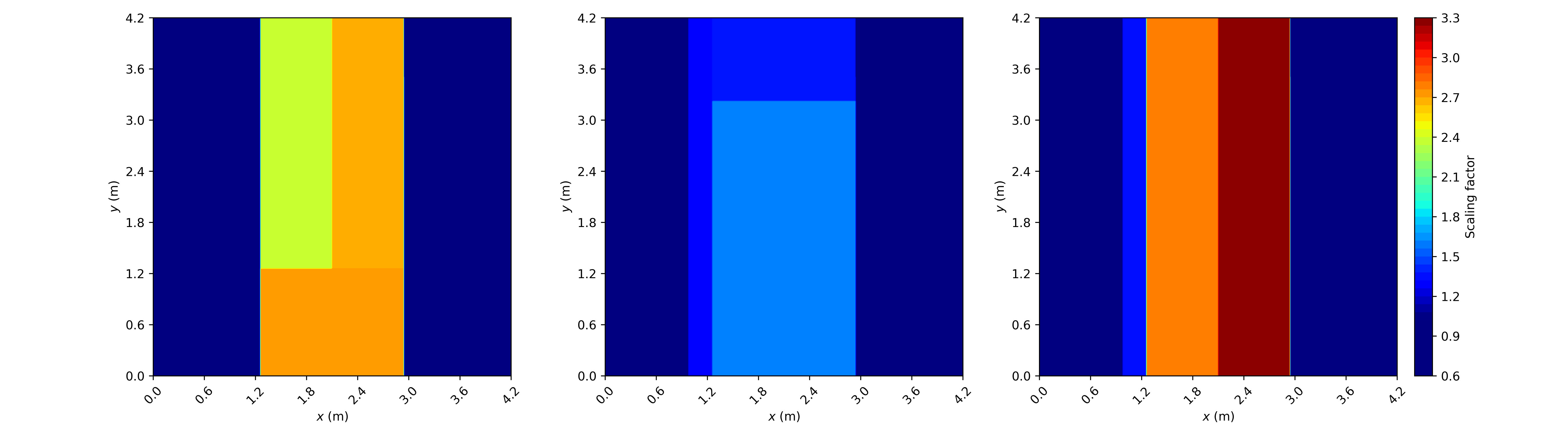}}\vspace{-6pt}
\caption{\looseness-1 Local Conformal Scaling Factor $\xi_k$ over double integrator positions in Corridor environment. Obtained by querying the fitted \texttt{DTree} over all environment positions with $a_x=a_y=v_y=0$ and (from left to right): $v_x=-2$ m/s, $v_x=0$m/s, $v_x=2$ m/s.}  \label{fig:conformal_scaling_factor_plot}
\end{figure}

\begin{figure}[H]
  \makebox[\textwidth][c]{\includegraphics[width=1.2\textwidth]{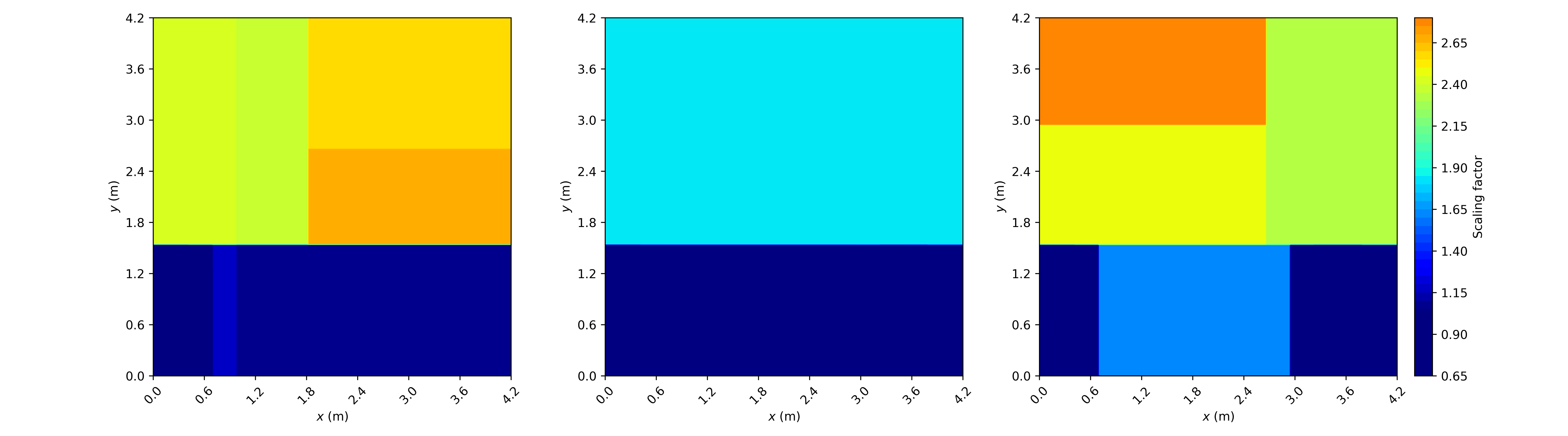}}\vspace{-6pt}
\caption{Local Conformal Scaling Factor $\xi_k$ over double integrator positions in U-Turn environment. Obtained by querying the fitted \texttt{DTree} over all environment positions with $a_x=a_y=v_y=0$ and (from left to right): $v_x=-2$ m/s, $v_x=0$m/s, $v_x=2$ m/s.} \label{fig:scaling_factor_uturn}
\end{figure}
\vspace{-30pt}
\begin{figure}
  \makebox[\textwidth][c]{\includegraphics[width=1.2\textwidth]{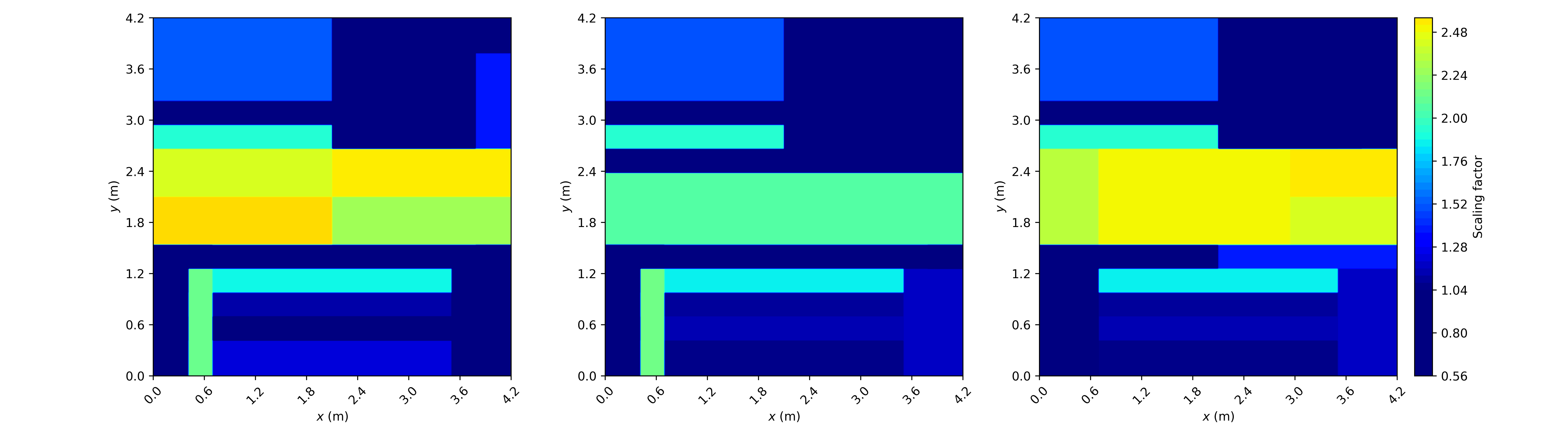}}\vspace{-6pt}
\caption{Local Conformal Scaling Factor $\xi_k$ over double integrator positions in Passage environment. Obtained by querying the fitted \texttt{DTree} over all environment positions with $a_x=a_y=v_y=0$ and (from left to right): $v_x=-2$ m/s, $v_x=0$m/s, $v_x=2$ m/s.} \label{fig:scaling_factor_passage}
\end{figure}
\vspace{-30pt}
\begin{figure}[b]
  \makebox[\textwidth][c]{\includegraphics[width=1.2\textwidth]{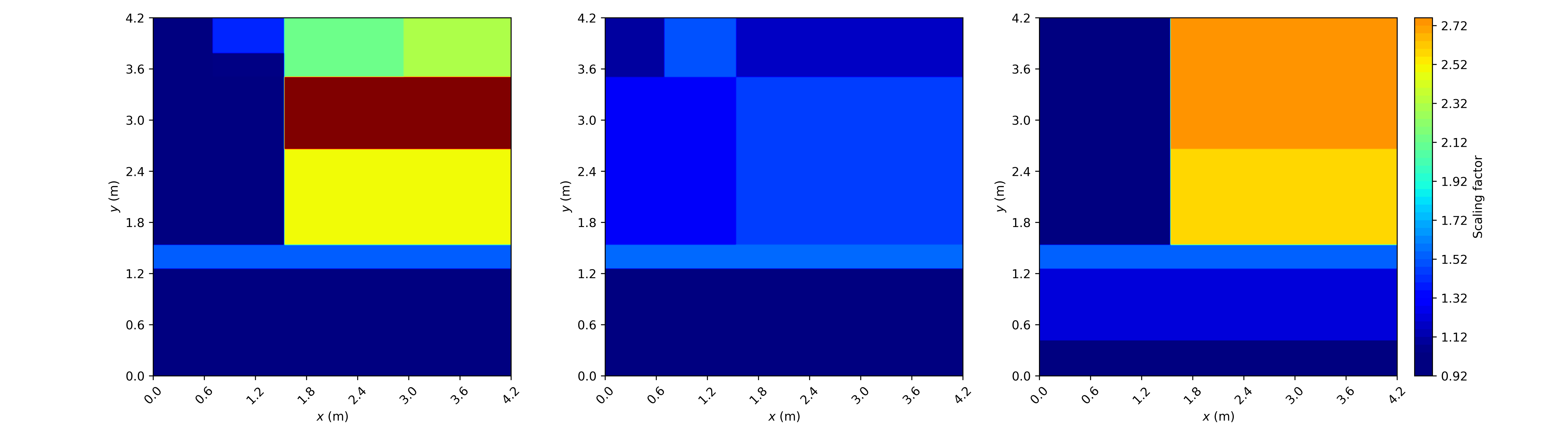}}\vspace{-6pt}
\caption{Local Conformal Scaling Factor $\xi_k$ over double integrator positions in L-Turn environment. Obtained by querying the fitted \texttt{DTree} over all environment positions with $a_x=a_y=v_y=0$ and (from left to right): $v_x=-2$ m/s, $v_x=0$m/s, $v_x=2$ m/s.} \label{fig:scaling_factor_lturn}
\end{figure}

\end{document}